\def\eqref#1{equation~\ref{#1}}
\def\Eqref#1{Equation~\ref{#1}}
\def\1{\bm{1}}
\DeclareMathAlphabet{\mathsfit}{\encodingdefault}{\sfdefault}{m}{sl}
\SetMathAlphabet{\mathsfit}{bold}{\encodingdefault}{\sfdefault}{bx}{n}
\newcommand{\E}{\mathbb{E}}
\DeclareMathOperator*{\argmax}{arg\,max}
\DeclareMathOperator*{\argmin}{arg\,min}
\newtheorem{dfn}{Definition}
\definecolor{royalblue}{rgb}{0.25, 0.41, 0.88}
\definecolor{revision}{RGB}{0,0,0}
\title{Mitigating Reward Over-Optimization in RLHF via Behavior-Supported Regularization}
\author{Juntao Dai$^{12}$, Taiye Chen$^{4}$, Yaodong Yang$^{34*}$, Qian Zheng$^{12}$\thanks{Corresponding author.}\,\,, Gang Pan$^{12}$ \\
$^{1}$College of Computer Science and Technology, Zhejiang University\\
$^{2}$The State Key Lab of Brain-Machine Intelligence, Zhejiang University\\
$^{3}$LLM Safety Centre, Beijing Academy of Artificial Intelligence\\
$^{4}$Center for AI Safety and Governance, Peking University\\
\texttt{\{juntaodai,qianzheng,gpan\}@zju.edu.com} \\
\texttt{\{yeyutaihan\}@stu.pku.edu.cn,\quad \{yaodong.yang\}@pku.edu.cn} \\
}
\begin{document}

\maketitle

\begin{abstract}
  Reinforcement learning from human feedback (RLHF) is an effective method for aligning large language models (LLMs) with human values. However, reward over-optimization remains an open challenge leading to discrepancies between the performance of LLMs under the reward model and the true human objectives. A primary contributor to reward over-optimization is the extrapolation error that arises when the reward model evaluates out-of-distribution (OOD) responses. However, current methods still fail to prevent the increasing frequency of OOD response generation during the reinforcement learning (RL) process and are not effective at handling extrapolation errors from OOD responses. In this work, we propose the \textit{Behavior-Supported Policy Optimization} (BSPO) method to mitigate the reward over-optimization issue. Specifically, we define \textit{behavior policy} as the next token distribution of the reward training dataset to model the in-distribution (ID) region of the reward model. Building on this, we introduce the behavior-supported Bellman operator to regularize the value function, penalizing all OOD values without impacting the ID ones. Consequently, BSPO reduces the generation of OOD responses during the RL process, thereby avoiding overestimation caused by the reward model’s extrapolation errors. Theoretically, we prove that BSPO guarantees a monotonic improvement of the supported policy until convergence to the optimal behavior-supported policy. Empirical results from extensive experiments show that BSPO outperforms baselines in preventing reward over-optimization due to OOD evaluation and finding the optimal ID policy.

\end{abstract}

\section{Introduction}

Reinforcement Learning from Human Feedback (RLHF) has been demonstrated as an effective method for aligning large language models (LLMs) with human values \citep{ouyang2022training,bai2022training,achiam2023gpt,yang2023baichuan2openlargescale,ji2023ai}.
A key phase of RLHF is reward modeling, where the reward model is trained on preference datasets to approximate human preferences \citep{stiennon2020learning,ganguli2022red}.
The reward model is subsequently used to evaluate the responses of LLMs during the reinforcement learning (RL) phase.
Despite its empirical success, RLHF is criticized for its vulnerability and instability \citep{casper2023open}.
One of the open challenges in RLHF is the \textit{reward over-optimization} issue \citep{gao2023scaling,coste2023reward}.
As shown in Figure \ref{fig:first_figure}(a), although the performance of LLMs may seem to improve under the reward model (\textit{proxy reward}), it can deviate from the actual human objectives (\textit{gold reward}).

One of the primary causes of reward over-optimization is the extrapolation error that arises when the reward model evaluates out-of-distribution (OOD) responses \citep{eisenstein2023helping,laidlaw2024preventing,yang2024regularizing}.
Due to the exploratory nature of RL, LLMs may generate responses that fall outside the training data distribution of the reward model.
Lacking the ability to accurately assess these unseen responses, the reward model may overestimate the reward signal, leading to severe overestimation in the value function as an incorrect policy evaluation.
As the policy iteration, the frequency of OOD responses tends to increase, further amplifying the over-optimization issue.

Previous work \citep{gao2023scaling} finds that enlarging the reward model and dataset may mitigate over-optimization, but it is often impractical in real-world scenarios.
Thus, many others focuses on enhancing the RL process using reward regularization.
A common approach is to use the Kullback-Leibler (KL) divergence as a penalty\citep{ouyang2022training,touvron2023llama}.
This method limits the policy updates to remain close to the initial policy \citep{gao2023scaling}, such as in the dotted inner circle in Figure \ref{fig:first_figure}(b).
However, due to the insensitivity to the in-distribution (ID) region of the reward model, it is difficult to determine an appropriate penalty strength, and it prevents full exploration of the ID region to identify the optimal solution.
Similar limitations exist in methods that introduce a maximum reward constraint \citep{moskovitz2023confronting}
Other approaches include using uncertainty quantifiers \citep{zhang2024overcoming} or ensembles \citep{coste2023reward}.
{\color{revision} Yet, uncertainty quantifiers may hard to generalize to OOD regions \citep{nalisnick2018deep} while ensemble method may suffer from consistent overestimation across ensembles and higher computational costs \citep{eisenstein2023helping}.}
Furthermore, a problem across above methods is that while they handle OOD responses pessimistically, they also impact the evaluation of ID ones, potentially leading to suboptimal solutions.

\begin{figure}[t]
  \centering
  \includegraphics[width=0.95\linewidth]{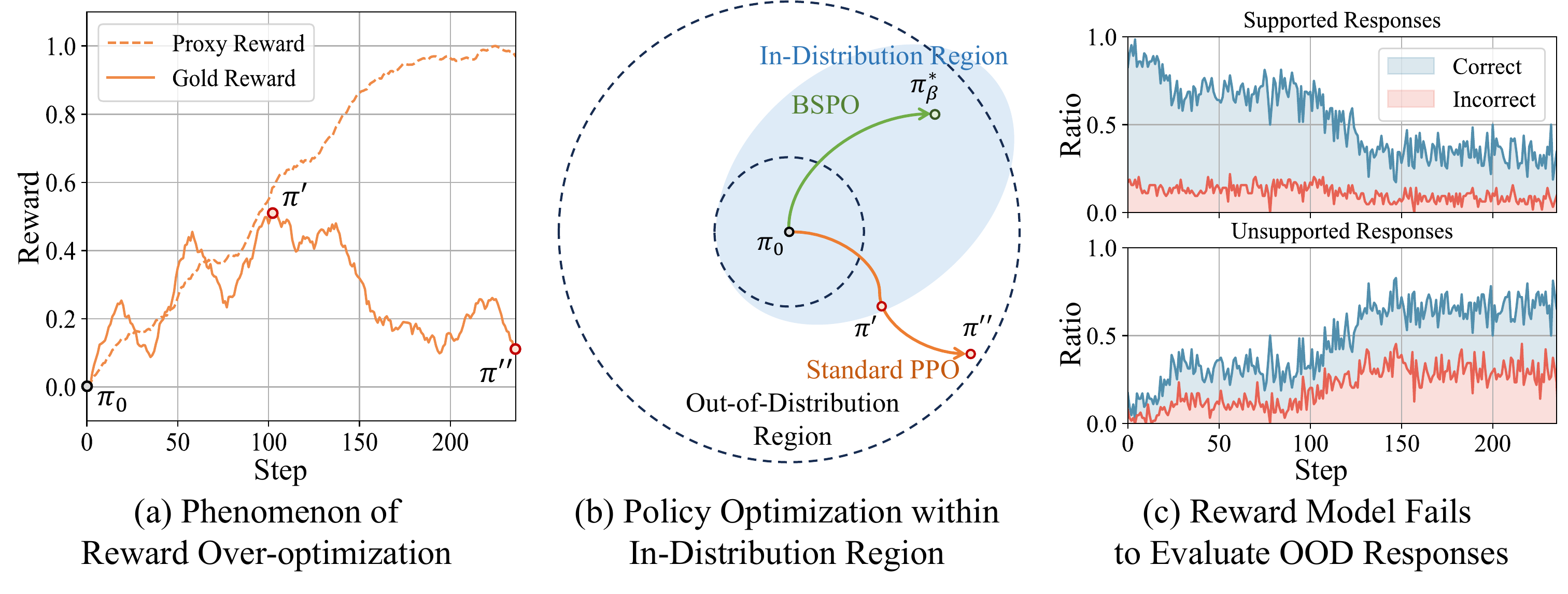}
  \vspace{-0.6em}
  \caption{\textbf{(a)} \textbf{Reward over-optimization.} Although the performance of LLMs may seem to improve under the reward model (\textit{proxy reward}), it deviates from the actual human objectives (\textit{gold reward}).
  \textbf{(b)} \textbf{Search in ID region.} Our algorithm guides policy iteration within the ID region of the reward model, whereas others may enter the OOD region, suffering from extrapolation errors.
  \textbf{(c) Hard to evaluate unsupported responses.}
  Responses are categorized as supported or unsupported, depending on whether they include actions unsupported by the behavior policy ($\beta(a|s) = 0$).
  As policy iterates, the occurrence of unsupported responses increases.
  {\color{revision}``Correct/Incorrect" indicates whether the proxy model's evaluation of a generated response aligns with the gold model.}
  The proxy model predicts preference pairs well for supported responses but struggles with unsupported ones.}
  \label{fig:first_figure}
\end{figure}

In this work, we propose the \textit{Behavior-Supported Policy Optimization} (BSPO), a novel approach to address reward over-optimization.
The core idea is to use a value regularization to guide policy iteration only within the in-distribution (ID) region, as shown in Figure \ref{fig:first_figure}(b).
Specifically, we use the distribution of the next token to characterize the action distribution in the reward training dataset.
Inspired by offline RL \citep{levine2020offline}, we refer to this distribution as the \textit{behavior policy}.
Due to the auto-regressive nature of LLMs, any OOD action results in the accumulation of extrapolation errors within unsupported responses, which invalidate the reward model, as shown in Figure \ref{fig:first_figure}(c).
Then, we introduce a behavior-supported Bellman operator to regularize the value function.
The regularized value function derived from this operator penalizes all OOD values without affecting the ID ones.
Our method leverages this value regularization to reduce OOD generation during RL, thereby preventing overestimation caused by the extrapolation errors of reward prediction.
Theoretically, BSPO guarantees monotonic improvement of the supported policy until convergence to the optimal behavior-supported policy, while other methods lack this convergence guarantee.

We summarize our primary contributions as three folds:
\begin{itemize}[leftmargin=*]
  \item We propose a novel method that leverages the next-token distribution to characterize the reward training dataset, enabling the detection of whether a response is OOD for the reward model.
  \item We introduce the BSPO algorithm, the first method that uses value regularization to address reward over-optimization. It penalizes OOD values without affecting ID ones, thereby theoretically achieving the same solution as standard policy evaluation.
  \item We provide a lightweight implementation of BSPO using the \textit{ScoreLM} model to predict rewards and behavior distributions. Through extensive experiments, we empirically show that BSPO outperforms baselines in avoiding reward over-optimization and finding the optimal ID policy.
\end{itemize}

\section{Background}
\paragraph{Token-level MDP}
When an LLM is modeled as an RL agent, its actions and states can be represented by tokens or token sequences.
Thus, we formalize language generation tasks as a token-level MDP $\mathcal M \doteq \left(\mathcal S, \mathcal A, T, \mathcal X, \mu,  r, \gamma\right)$.
Given the vocabulary $\mathcal{A}$ of an LLM, it also represents the action set in the MDP, where an action $a$ corresponds to generating a token of the vocabulary.
$\mathcal{X}$ is the set of prompts, $\mu$ is the distribution of the prompt $x$ given to the LLM, and $\mathcal{S}$ is the set of states, where each state $s$ is composed of a prompt $x \in \mathcal{X}$ and a sequence of generated tokens $a_{0:t-1} = \bigcup_{i=0}^{t-1} a_i$.
Specifically, $s \doteq x \cup a_{0:t-1}$.
Thus, the state transition is defined as $ T(x \cup a_{0:t-1}, a_t) = x \cup a_{0:t} $. The $r$ denotes the reward function and the $\gamma$ is the discount factor.
In this framework, a stationary policy, $\pi$, is a probability distribution indicating the likelihood of the next token $a_t$ given state $s_t \doteq x\cup a_{0:t-1}$ at step $t$. Then, $\Pi$ denotes the set of all stationary policies.

The goal of reinforcement learning is to maximize a performance measure, $\mathcal J(\pi)$, which is typically defined as an infinite horizon discounted total return, $\mathcal J(\pi)\doteq\E_{\tau\sim\pi}\left[\sum^\infty_{t=0}\gamma^t r_t\right]$.
Here, $\tau \doteq \left\{s_t, a_t, r_t\right\}^\infty_{t=0}\sim \pi$ denotes the distribution over trajectories generated by $\pi$, where $s_0=x\sim\mu$, $a_t\sim\pi(\cdot\mid s_t)$, $s_{t+1}=T(s_t, a_t)$, and $r_t = r(s_t, a_t)$.
We express the state value function of $\pi$ as $V^\pi(s) \doteq \E_{\tau\sim\pi} \left[\sum^\infty_{t=0}\gamma^t r_t\mid s_0 = s\right]$ and the state-action value function as $Q^\pi(s, a) \doteq \E_{\tau \sim \pi}\left[\sum^\infty_{t=0} \gamma^t r_t\mid s_0 = s, a_0 = a\right]$.
The advantage function is $A^\pi(s, a) = Q^\pi(s, a) - V^\pi(s)$.

\paragraph{Preference Modelling}
The RLHF method improves the quality of LLM responses by leveraging human preference data through a reward model \citep{ouyang2022traininglanguagemodelsfollow, bai2022training}.
The reward model denoted as $R(x,y)$ is designed to align with human preferences, where $x$ represents the input prompt and $y$ is the corresponding response generated by the LLM.
Human preferences are captured as pairs of responses, symbolized as $y_w\succ y_l | x$, where $y_w$ (win) denotes a response more preferred by humans than $y_l$ (lose).
Using the Bradley-Terry model \citep{Bradley1952RankAO}, the likelihood of a preference pair can be estimated as $p(y_w\succ y_l | x)=\sigma(R(x,y_w)-R(x,y_l))$
where $\sigma(x)=\frac{1}{1+e^{-x}}$ is the logistic sigmoid function.
Consequently, given the preference dataset $\mathcal D = \{x^i,y_w^i,y_l^i\}^N_{i=1}$, the reward model is trained by minimizing the negative log-likelihood loss, $\mathcal L(\phi, \mathcal{D})=-\E_{(x,y_w,y_l)\sim \mathcal{D}}[\log\sigma(R_\phi(x,y_w)-R_\phi(x,y_l))]$.
In the context of LLMs, the reward model typically includes a linear layer after the final transformer layer.
During the RL stage, the reward model assigns a reward to the final token of the sequence, which is typically the EOS token.

\paragraph{Related Work of Reward Over-Optimization}

This phenomenon, where the policy language model exploits imperfects in the reward model, is commonly known as \textit{reward over-optimization} \citep{gao2023scaling}, and is also called \textit{reward hacking} \citep{amodei2016concreteproblemsaisafety, skalse2022defining} or \textit{reward gaming} \citep{pang2023rewardgamingconditionaltext}.
Given the high cost of evaluations for studying reward over-optimization, most studies adopt the synthetic setup that employs a powerful gold model to substitute for human labeling and evaluation \citep{gao2023scaling,moskovitz2023confronting,coste2023reward}.

The work most closely related to ours includes a series of reward regularization methods, such as adding a KL penalty to the reward \citep{1320776d-9e76-337e-a755-73010b6e4b64} or utilizing a reward ensemble \citep{coste2023reward}.
KL penalty method uses a per-token KL penalty from the SFT model to mitigate over-optimization of the reward model, which is first proposed by \cite{stiennon2022learningsummarizehumanfeedback} and widely employed in the context of RLHF \citep{ouyang2022training, bai2022training}.
Similarly, building on the concept of early stopping, \cite{moskovitz2023confronting} introduces an approach called Constrained PPO to prevent the policy from surpassing each reward model's threshold of usefulness while minimizing KL divergence in the token distribution.
However, due to insensitivity to the reward model's in-distribution (ID) region, these methods conservatively iterate near the initial model.
In contrast, our algorithm leverages the behavior policy $\beta$, allowing relaxed distance constraints and broader exploration of regions where valid reward model predictions exist.

On the other hand, previous work in machine learning demonstrates that combining multiple estimators can enhance robustness \citep{Du2019}.
\cite{coste2023reward} train several proxy models and develop ensemble-based conservative optimization methods, such as using variance for uncertainty-weighted optimization (UWO) and applying the worst reward for conservative worst-case optimization (WCO).
\cite{ramé2024warmbenefitsweightaveraged} propose a more efficient ensemble-based approach by leveraging weight-averaged reward models.
\cite{zhang2024overcoming} introduce a lightweight uncertainty-weighted optimization method that quantifies uncertainties of rewards by utilizing only the last layer embeddings of the reward model.
However, without additional information about OOD responses, reward model ensembles mitigate but do not eliminate reward hacking \citep{nalisnick2018deep,eisenstein2023helping}.
All ensemble proxy models may exhibit consistent error patterns in the OOD region.

Another route to mitigating reward over-optimization is by training more generalized and robust reward models \citep{wang2024secretsrlhflargelanguage, wang2024interpretablepreferencesmultiobjectivereward, shen2024boostingrewardmodelpreferenceconditional}.
\cite{gao2023scaling} demonstrated in a synthetic setup that increasing the reward model size and the training data volume can mitigate this over-optimization issue.
Notably, methods aimed at enhancing the generalization at the reward modeling stage run parallel to our approach and can be combined. In our implementation, the training of the reward model incorporates the hidden state regularization technique \citep{yang2024regularizing}.
\section{Analysis: Behavior-Supported Method} \label{sec:analysis}

\subsection{Behavior Policy for OOD Detection of Reward Prediction}

Since reward learning is inherently data-driven, it faces significant challenges in evaluating responses that lie outside the distribution of the training dataset.
However, current OOD detection techniques \citep{yang2024generalized} require additional information for the OOD part \citep{nalisnick2018deep}, and are not applicable to reward prediction in LLM.
Therefore, we introduce an OOD detection approach based on the power of well-pretrained LLMs for next-token prediction.

Based on the token-level MDP modeling in RL for LLMs, we use the distribution of the next token, $\beta: \mathcal{S} \times \mathcal{A} \rightarrow [0,1]$, to characterize the action distribution in the preference dataset.
Inspired by offline RL \citep{levine2020offline,wu2022supported}, we refer to this distribution as the \textit{behavior policy}.
The behavior policy divides actions at a given state $s$ into two categories, defined as follows:
\begin{dfn}[Behavior-Supported Action]
  In a given state $s$, an action $a$ is considered supported by the behavior distribution $\beta$ if and only if $\beta(a|s) > 0$.
\end{dfn}
Due to the auto-regressive nature of LLMs, we believe that any OOD action (i.e., one not a behavior-supported action) leads to the accumulation of extrapolation errors in the resulting unsupported responses.
To validate this hypothesis, we collect all responses generated during the RLHF process under the same experimental settings of the experiment section.
These responses are categorized into supported responses and unsupported responses, depending on whether they contain behavior-unsupported actions (i.e., $\beta(a|s)=0$).
These responses, along with those generated by the initial model given the same prompt, construct comparison pairs.
We use these comparison pairs and the gold model as ground truth to test whether the proxy model can correctly predict preferences.

Figure \ref{fig:first_figure}(c) shows that as the policy iterates, the proportion of unsupported responses generated by the LLM increases.
For comparison pairs consisting of supported responses, the proxy model accurately predicts preferences, achieving an average accuracy of 75.91\%, which is comparable to its performance on the test set (Figure \ref{fig:scorelm}(b)).
However, for comparison pairs composed of unsupported responses, the proxy model's predictive accuracy drops significantly to 58.10\%.
These results indicate whether the response contains only behavior-supported actions is an effective indicator for measuring whether it is OOD for reward prediction.
More results are provided in Appendix \ref{app:unsupported}.

\subsection{Behavior-Supported Regularization}

A natural idea to mitigate reward over-optimization during the RL is to avoid taking actions that are not supported by the behavior policy.
In this paper, we consider applying regularization to the value function in order to reduce the ranking of actions that are not supported by the behavior distribution during policy evaluation.
Specifically, we define $r_\text{min} \doteq \min_{s\in\mathcal{S}, a\in\mathcal{A}}\left[r(s,a)\right]$ and $Q_\text{min}\doteq \sum^\infty_{t=0}\gamma^t r_\text{min} = \frac{r_\text{min}}{1-\gamma}$. Then, we propose the behavior-supported Bellman operator:
\begin{equation}\label{eq:supported_bellman}
  \mathcal{T}^\pi_\beta Q(s,a) \doteq \begin{cases}
    \mathcal{T}^\pi Q(s,a), & \text{if}\quad\beta(a|s) > 0, \\
    Q_\text{min},           & \text{otherwise}.
  \end{cases}
\end{equation}
where $\mathcal{T}^\pi Q(s,a) \doteq r(s,a) + \gamma \E_{a'\sim\pi(\cdot|s')}\left[Q(s',a')\right]$ represents the standard Bellman operator, and $s'=T(s,a)$ is the next state. This new operator has the following properties:
\begin{restatable}[Contraction of $\mathcal{T}^\pi_\beta$]{theorem}{contraction}\label{the:contraction}
  For any policy $\pi \in \Pi$, the operator $\mathcal{T}^\pi_\beta$ is $\gamma$-contraction with respect to the $\mathcal{L}_\infty$ norm over the space $\mathcal{S}\times\mathcal A$.
\end{restatable}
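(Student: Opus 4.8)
The plan is to show directly that $\|\mathcal{T}^\pi_\beta Q_1 - \mathcal{T}^\pi_\beta Q_2\|_\infty \le \gamma \|Q_1 - Q_2\|_\infty$ for any two bounded functions $Q_1, Q_2 : \mathcal{S}\times\mathcal{A}\to\mathbb{R}$, by case analysis on whether a given state-action pair $(s,a)$ is behavior-supported. First I would fix an arbitrary $(s,a)\in\mathcal{S}\times\mathcal{A}$ and split into two cases according to the definition in \eqref{eq:supported_bellman}.

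In the unsupported case $\beta(a\mid s)=0$, both operators output the constant $Q_\text{min}$, so $|\mathcal{T}^\pi_\beta Q_1(s,a) - \mathcal{T}^\pi_\beta Q_2(s,a)| = 0 \le \gamma\|Q_1-Q_2\|_\infty$ trivially. In the supported case $\beta(a\mid s)>0$, the operator reduces to the standard Bellman operator, so I would bound
\begin{equation*}
  |\mathcal{T}^\pi Q_1(s,a) - \mathcal{T}^\pi Q_2(s,a)| = \gamma\left|\E_{a'\sim\pi(\cdot\mid s')}\left[Q_1(s',a') - Q_2(s',a')\right]\right| \le \gamma\, \E_{a'\sim\pi(\cdot\mid s')}\left[|Q_1(s',a') - Q_2(s',a')|\right],
\end{equation*}
where $s' = T(s,a)$, using that the reward term $r(s,a)$ cancels. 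The inner expectation is then bounded above by $\sup_{s'',a''}|Q_1(s'',a'') - Q_2(s'',a'')| = \|Q_1-Q_2\|_\infty$, giving the $\gamma$-contraction bound for this pair. Taking the supremum over all $(s,a)$ yields the claim.

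The argument is essentially routine; the only subtlety I would be careful about is well-definedness of the operator on the relevant function space — specifically that $\mathcal{T}^\pi_\beta$ maps bounded functions to bounded functions (so that $\|\cdot\|_\infty$ is finite and the Banach fixed-point framework applies), which follows since $r$ is bounded (its infimum $r_\text{min}$ exists by assumption, and one needs the analogous boundedness above) and $|Q_\text{min}| < \infty$ because $\gamma < 1$. I would also note explicitly that the expectation over $a'\sim\pi(\cdot\mid s')$ is a convex combination, so Jensen's inequality (or simply the triangle inequality for sums/integrals) justifies moving the absolute value inside. No genuine obstacle is expected here; the main point of stating the theorem is to legitimize defining the regularized value function $Q^\pi_\beta$ as the unique fixed point of $\mathcal{T}^\pi_\beta$ in the subsequent development.
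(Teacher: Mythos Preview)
Your proposal is correct and matches the paper's proof essentially step for step: the paper also fixes an arbitrary $(s,a)$, splits into the cases $\beta(a\mid s)=0$ (difference is zero) and $\beta(a\mid s)>0$ (standard Bellman contraction with the reward term cancelling and the expectation bounded by the sup-norm), then takes the supremum. Your additional remarks on well-definedness and the use of Jensen/triangle inequality are fine elaborations but not needed beyond what the paper does.
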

The proofs are provided in the Appendix \ref{app:analysis}.
The $\gamma$-contraction property of the operator $\mathcal{T}^\pi_\beta$ guarantees that, for any initial Q-values, iterative application of $\mathcal{T}^\pi_\beta$ converges to a unique fixed point at a rate of $\gamma$. Thus, we find this fixed point by iteratively solving the problem, $\min_Q \E_{\tau\sim\pi}(Q(s,a) - \mathcal{T}^\pi_\beta Q(s,a))^2$, and use it for policy evaluation to optimize the policy $\pi$.

Intuitively, an observation from \Eqref{eq:supported_bellman} is that, at each iteration $k$, $\mathcal{T}^\pi_\beta$ yields the same $Q^{k+1}(s,a)$ as the standard operator $\mathcal{T}^\pi$ for any behavior-supported actions (i.e., ID actions). However, for actions that are not supported (i.e., OOD actions), $\mathcal{T}^\pi_\beta$ yields smaller $Q^{k+1}(s,a)$.
Through this iterative process, the convergence to a fixed point ensures the validity of the following theorem:

\begin{restatable}[Fixed Points]{theorem}{fixedPoint}\label{the:fixed_point}
  For any policy $\pi \in \Pi$, the fixed point $Q^\pi_\beta$ of $\mathcal{T}^\pi_\beta$ satisfies
  \begin{equation}
    \begin{cases}
      Q_\text{min} \leq Q^\pi_\beta(s,a) \leq Q^\pi(s,a), & \text{if}\quad\beta(a|s) > 0, \\
      Q^\pi_\beta(s,a) = Q_\text{min},                    & \text{otherwise}.
    \end{cases}
  \end{equation}
  where $Q^\pi(s,a)$ is fixed point of standard Bellman operator $\mathcal{T}^\pi$.
\end{restatable}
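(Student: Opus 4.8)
The plan is to combine the $\gamma$-contraction of $\mathcal{T}^\pi_\beta$ from Theorem~\ref{the:contraction} with two order-theoretic facts about the operator: that it is monotone, and that on the relevant part of the function space it is dominated by the standard operator $\mathcal{T}^\pi$. First I would dispose of the unsupported case, which is immediate from the definition in \Eqref{eq:supported_bellman}: if $\beta(a|s)=0$ then $\mathcal{T}^\pi_\beta Q(s,a)=Q_\text{min}$ for every $Q$, so in particular the unique fixed point must satisfy $Q^\pi_\beta(s,a)=Q_\text{min}$.

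For the supported case I would record a few elementary observations. \textbf{(i)} \emph{Monotonicity}: if $Q_1\le Q_2$ pointwise on $\mathcal S\times\mathcal A$, then $\mathcal{T}^\pi_\beta Q_1\le\mathcal{T}^\pi_\beta Q_2$, since on unsupported $(s,a)$ both sides equal $Q_\text{min}$ and on supported $(s,a)$ the value $r(s,a)+\gamma\,\E_{a'\sim\pi(\cdot|s')}[Q_i(s',a')]$ is monotone in $Q_i$. \textbf{(ii)} \emph{Domination}: for any $Q$ with $Q\ge Q_\text{min}$ pointwise, $\mathcal{T}^\pi_\beta Q\le\mathcal{T}^\pi Q$; indeed the two operators agree on supported $(s,a)$, while on unsupported $(s,a)$ we have $\mathcal{T}^\pi_\beta Q(s,a)=Q_\text{min}=r_\text{min}+\gamma Q_\text{min}\le r(s,a)+\gamma\,\E_{a'\sim\pi(\cdot|s')}[Q(s',a')]=\mathcal{T}^\pi Q(s,a)$, using $r(s,a)\ge r_\text{min}$ and $Q\ge Q_\text{min}$. \textbf{(iii)} The constant function $Q_\text{min}$ is a sub-solution, $\mathcal{T}^\pi_\beta Q_\text{min}\ge Q_\text{min}$ (equality on unsupported $(s,a)$, and $r(s,a)+\gamma Q_\text{min}\ge r_\text{min}+\gamma Q_\text{min}=Q_\text{min}$ on supported $(s,a)$). \textbf{(iv)} The standard fixed point satisfies $Q^\pi\ge Q_\text{min}$, because every realized return $\sum_{t}\gamma^t r_t$ exceeds $\sum_t\gamma^t r_\text{min}=Q_\text{min}$; hence by (ii), $\mathcal{T}^\pi_\beta Q^\pi\le\mathcal{T}^\pi Q^\pi=Q^\pi$, so $Q^\pi$ is a super-solution.

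Then I would run two monotone iterations. Iterating $\mathcal{T}^\pi_\beta$ from the constant function $Q_\text{min}$, the sub-solution property (iii) and monotonicity (i) give, by induction, a pointwise nondecreasing sequence; by Theorem~\ref{the:contraction} this sequence converges to $Q^\pi_\beta$, so $Q^\pi_\beta\ge Q_\text{min}$. Iterating instead from $Q^\pi$, the super-solution property (iv) and monotonicity give a pointwise nonincreasing sequence, which again converges to $Q^\pi_\beta$ by Theorem~\ref{the:contraction}, so $Q^\pi_\beta\le Q^\pi$. Combining, $Q_\text{min}\le Q^\pi_\beta(s,a)\le Q^\pi(s,a)$ for all $(s,a)$, and in particular for every behavior-supported $(s,a)$, which together with the unsupported case finishes the proof.

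I do not anticipate a serious obstacle; the only point requiring care is that the monotone iterates converge to the fixed point rather than merely to some monotone limit, but this is handed to us for free by the contraction property of Theorem~\ref{the:contraction}, which forces every orbit to converge to the unique $Q^\pi_\beta$. As a sanity check and for intuition, note that for a supported $(s,a)$ the fixed point $Q^\pi_\beta(s,a)$ equals the expected $\pi$-return truncated at the first future step at which an unsupported action is taken, with $Q_\text{min}$ substituted for the discounted tail from that step onward; replacing a genuine discounted tail $\sum_{t\ge\tau}\gamma^t r_t$ by $\gamma^\tau Q_\text{min}$ can only lower the value (giving the upper bound) while keeping it $\ge Q_\text{min}$ (giving the lower bound), which is exactly the content of the theorem.
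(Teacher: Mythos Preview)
Your proof is correct and takes a genuinely different route from the paper's. The paper argues the two bounds separately and more by hand: for the lower bound it picks the minimizer $(\hat s,\hat a)$ of $Q^\pi_\beta$ over supported pairs, expands the fixed-point equation there, splits the next-action expectation into supported and unsupported parts, and solves a small inequality in $\lambda=\sum_{a'\in\texttt{supp}}\pi(a'|s')$ to conclude $Q^\pi_\beta(\hat s,\hat a)\ge Q_\text{min}$; for the upper bound it first uses this lower bound to show $\mathcal T^\pi_\beta Q^\pi_\beta\le\mathcal T^\pi Q^\pi_\beta$ pointwise, then unrolls the standard operator repeatedly to telescope down to $Q^\pi$. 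Your approach instead isolates monotonicity of $\mathcal T^\pi_\beta$ and the sub/super-solution inequalities $\mathcal T^\pi_\beta Q_\text{min}\ge Q_\text{min}$ and $\mathcal T^\pi_\beta Q^\pi\le Q^\pi$, and then lets the contraction of Theorem~\ref{the:contraction} do the work via two monotone orbits. This is cleaner and more structural: the four observations (i)--(iv) are one-line checks, and the conclusion follows from general fixed-point order theory rather than a bespoke minimum-point computation or an infinite unrolling. The paper's argument, on the other hand, is slightly more self-contained in that it never needs to name monotonicity or sub/super-solutions explicitly, and its unrolling step makes the ``truncate the tail at the first unsupported action'' interpretation (which you mention as a sanity check) essentially explicit in the algebra. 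Either argument is complete; yours is shorter.
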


We refer to the fixed point $Q^\pi_\beta$ as the behavior-supported Q-value function.
Theorem \ref{the:fixed_point} indicates that for any policy $\pi$, using the behavior-supported Q-value function for policy evaluation underestimates the future returns of OOD actions, thereby disadvantaging these actions across the action space.
This leads the policy iteration process to reinforce behavior-supported actions and weakens unsupported ones.
Now, we consider a common policy iteration using behavior-supported value, as follows:
\begin{equation} \label{eq:supported_po}
  \pi_{k+1} = \argmax_{\pi\in\Pi} \E_{\tau\sim{\pi_{k}}} \left[
    \frac{\pi(a_t|s_t)}{\pi_{k}(a_t|s_t)}A_\beta^{\pi_{k}}(s_t,a_t)
    \right].
\end{equation}
where $V_\beta^{\pi_{k}}(s) = \E_{a\sim\pi_{k}(\cdot|s)}\left[Q_\beta^{\pi_{k}}(s,a)\right]$, and $A_\beta^{\pi_{k}}(s,a) = Q_\beta^{\pi_{k}}(s,a) - V_\beta^{\pi_{k}}(s)$.
As the policy iterates, the action selection will eventually consist only of behavior-supported actions.
We define such a policy as a behavior-supported policy.
Specifically, for any state $s$, the supported action space of state $s$ is denoted as $\texttt{supp}\left(\beta(\cdot|s)\right) \doteq \left\{a \in \mathcal{A} \mid \beta(a|s) > 0\right\}$.
The set of all behavior-supported policies is defined as
\begin{equation}
  \Pi_\beta \doteq \left\{
  \pi\in\Pi \mid \pi(a| s)=0, \quad\forall s\in\mathcal{S}, a\notin \texttt{supp}\left(\beta(\cdot|s)\right)
  \right\}.
\end{equation}
Furthermore, the following corollary guarantees that the policy optimization method in \Eqref{eq:supported_po} yields behavior-supported policies $\pi\in\Pi_\beta$ through regularized policy evaluation $Q^\pi_\beta$.
\begin{restatable}[Supported Policy Optimization]{corollary}{supportedOptimization}\label{coro:supported_optimization}
  The policy optimization method mentioned in \Eqref{eq:supported_po} yields behavior-supported policy $\pi\in\Pi_\beta$.
\end{restatable}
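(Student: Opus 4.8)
The plan is to reduce the policy update in \Eqref{eq:supported_po} to a collection of per-state linear programs over the action simplex and to show that, thanks to Theorem~\ref{the:fixed_point}, behavior-unsupported actions are never strict maximizers of any of these programs; hence the greedy maximizer $\pi_{k+1}$ places zero mass on unsupported actions at every state it can reach, i.e.\ $\pi_{k+1}\in\Pi_\beta$.

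First I would unfold the objective. Letting $d^{\pi_k}$ denote the (discounted) state-visitation measure induced by $\pi_k$ and using the importance-weight cancellation $\E_{a\sim\pi_k(\cdot|s)}[\frac{\pi(a|s)}{\pi_k(a|s)}A_\beta^{\pi_k}(s,a)] = \sum_a \pi(a|s)A_\beta^{\pi_k}(s,a)$, the objective of \Eqref{eq:supported_po} equals $\sum_s d^{\pi_k}(s)\sum_a \pi(a|s)A_\beta^{\pi_k}(s,a)$. Because $d^{\pi_k}(s)\ge 0$ and the inner sums are independent across states, maximizing over $\pi\in\Pi$ decouples into, for each $s$ with $d^{\pi_k}(s)>0$, solving $\max_{\pi(\cdot|s)\in\Delta(\mathcal A)}\sum_a \pi(a|s)A_\beta^{\pi_k}(s,a)$; this linear program has optimal value $\max_a A_\beta^{\pi_k}(s,a)$, attained precisely by the distributions supported on $\argmax_a A_\beta^{\pi_k}(s,a)$.

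Next I would bound the two kinds of advantages. Since $\E_{a\sim\pi_k(\cdot|s)}[A_\beta^{\pi_k}(s,a)]=0$ by the definitions of $V_\beta^{\pi_k}$ and $A_\beta^{\pi_k}$, we get $\max_a A_\beta^{\pi_k}(s,a)\ge 0$. Theorem~\ref{the:fixed_point} gives $Q_\beta^{\pi_k}(s,a)=Q_\text{min}$ for every unsupported action $a$ and $Q_\beta^{\pi_k}\ge Q_\text{min}$ everywhere, so $V_\beta^{\pi_k}(s)=\E_{a\sim\pi_k(\cdot|s)}[Q_\beta^{\pi_k}(s,a)]\ge Q_\text{min}$ and therefore $A_\beta^{\pi_k}(s,a)=Q_\text{min}-V_\beta^{\pi_k}(s)\le 0$ for every unsupported $a$. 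Combining the two bounds, no unsupported action strictly exceeds the optimal per-state value, so the greedy maximizer --- the one supported on $\argmax_a A_\beta^{\pi_k}(s,a)$ --- assigns zero probability to unsupported actions at every state with $d^{\pi_k}(s)>0$, and on unreachable states one may take $\pi_{k+1}(\cdot|s)$ supported on $\texttt{supp}(\beta(\cdot|s))$; hence $\pi_{k+1}\in\Pi_\beta$.

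The delicate point is the degenerate tie: when $V_\beta^{\pi_k}(s)=Q_\text{min}$, an unsupported action attains advantage exactly $0=\max_a A_\beta^{\pi_k}(s,a)$, so a priori some maximizer could keep mass on it. I would dispose of this by observing that in that case the per-state linear program is indifferent among all actions with advantage $0$, a set that always contains the supported actions (here is where I use that $\texttt{supp}(\beta(\cdot|s))\neq\emptyset$), so one can always select a maximizer supported on $\texttt{supp}(\beta(\cdot|s))$ without changing the objective value. I would therefore fold into the statement the natural tie-breaking convention --- resolve ties in favour of behavior-supported actions --- which makes $\pi_{k+1}\in\Pi_\beta$ unconditional and is consistent with the paper's goal of fully exploring the ID region. (A secondary caveat I would note: the importance-ratio form of the objective presumes $\pi_k$ has full support, as for softmax policies; otherwise one argues directly from the expanded form $\sum_a\pi(a|s)A_\beta^{\pi_k}(s,a)$.)
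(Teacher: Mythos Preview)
Your proof is correct and follows essentially the same route as the paper: both unfold the objective into per-state maximization via the importance-weight cancellation, then use Theorem~\ref{the:fixed_point} to show that unsupported actions attain the minimal value (the paper works with $Q_\beta^{\pi_k}$ directly while you phrase it via $A_\beta^{\pi_k}$, but these differ by the constant $V_\beta^{\pi_k}(s)$ so the argmax is identical). Your explicit treatment of the degenerate tie $V_\beta^{\pi_k}(s)=Q_\text{min}$ and the full-support caveat for the importance ratio are in fact more careful than the paper's own argument, which simply asserts ``the maximum solution will only choose behavior-supported actions'' without discussing tie-breaking.
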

Theoretically, we demonstrate that policy iteration with the regularized value function guarantees that each iteration results in a behavior-supported policy.
This restricts the search for the optimal policy to the ID region of the reward model.
As a result, the reward model does not need to evaluate OOD responses generated with behavior-unsupported actions, thereby preventing the reward over-optimization issue caused by extrapolation errors.

Furthermore, the following corollary holds for behavior-supported policy iterations:
\begin{restatable}{corollary}{supportedFixedPoint}\label{cor:supported_fixed_point}
  For any behavior-supported policy $\pi \in \Pi_\beta$, the fixed point $Q^\pi_\beta$ of $\mathcal{T}^\pi_\beta$ satisfies
  \begin{equation}
    Q^\pi_\beta(s,a) = \begin{cases}
      Q^\pi(s,a),   & \text{if}\quad\beta(a|s) > 0, \\
      Q_\text{min}, & \text{otherwise}.
    \end{cases}
  \end{equation}
\end{restatable}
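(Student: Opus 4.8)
The plan is to write down the claimed function explicitly, verify by direct substitution that it is a fixed point of $\mathcal{T}^\pi_\beta$, and then appeal to uniqueness of that fixed point. Concretely, define $\tilde Q : \mathcal{S}\times\mathcal{A}\to\R$ by $\tilde Q(s,a) \doteq Q^\pi(s,a)$ if $\beta(a|s)>0$ and $\tilde Q(s,a) \doteq Q_\text{min}$ otherwise. By Theorem~\ref{the:contraction}, $\mathcal{T}^\pi_\beta$ is a $\gamma$-contraction in the $\mathcal{L}_\infty$ norm, hence by the Banach fixed-point theorem it has a unique fixed point, which is by definition $Q^\pi_\beta$; it therefore suffices to check $\mathcal{T}^\pi_\beta \tilde Q = \tilde Q$, since this forces $Q^\pi_\beta = \tilde Q$, i.e. exactly the stated formula.

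When $\beta(a|s)=0$, the second branch of \Eqref{eq:supported_bellman} gives $\mathcal{T}^\pi_\beta \tilde Q(s,a) = Q_\text{min} = \tilde Q(s,a)$ irrespective of the input function, so this case is immediate (it also reproves the ``otherwise'' line of the corollary, which in any event is already contained in Theorem~\ref{the:fixed_point}). When $\beta(a|s)>0$, the first branch gives $\mathcal{T}^\pi_\beta \tilde Q(s,a) = \mathcal{T}^\pi \tilde Q(s,a) = r(s,a) + \gamma\,\E_{a'\sim\pi(\cdot|s')}[\tilde Q(s',a')]$ with $s'=T(s,a)$. Since $\pi\in\Pi_\beta$, we have $\pi(a'|s')=0$ for every $a'\notin\texttt{supp}(\beta(\cdot|s'))$, so the expectation is supported only on actions $a'$ with $\beta(a'|s')>0$, on which $\tilde Q(s',a')=Q^\pi(s',a')$ by construction; hence $\E_{a'\sim\pi(\cdot|s')}[\tilde Q(s',a')] = \E_{a'\sim\pi(\cdot|s')}[Q^\pi(s',a')]$. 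Substituting back and using that $Q^\pi$ is the fixed point of the standard operator $\mathcal{T}^\pi$ yields $\mathcal{T}^\pi_\beta \tilde Q(s,a) = r(s,a) + \gamma\,\E_{a'\sim\pi(\cdot|s')}[Q^\pi(s',a')] = (\mathcal{T}^\pi Q^\pi)(s,a) = Q^\pi(s,a) = \tilde Q(s,a)$. Thus $\tilde Q$ is a fixed point of $\mathcal{T}^\pi_\beta$, and uniqueness gives the claim.

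The only step demanding any care is the replacement of $\tilde Q$ by $Q^\pi$ inside the Bellman backup, i.e. the identity $\E_{a'\sim\pi(\cdot|s')}[\tilde Q(s',a')] = \E_{a'\sim\pi(\cdot|s')}[Q^\pi(s',a')]$: this is precisely where the hypothesis $\pi\in\Pi_\beta$ is used, since a behavior-supported policy places no probability mass on OOD actions and therefore never ``sees'' the clipped values $Q_\text{min}$ that $\tilde Q$ assigns to unsupported state-action pairs. Everything else is bookkeeping, and no growth or boundedness estimates are needed beyond those already used to define $Q_\text{min}$. An essentially equivalent alternative would be a trajectory argument — every trajectory generated by $\pi\in\Pi_\beta$ starting from a supported $(s,a)$ stays within behavior-supported state-action pairs, on which $\mathcal{T}^\pi_\beta$ agrees with $\mathcal{T}^\pi$, so the two operators yield the same discounted return from $(s,a)$ — but the explicit fixed-point verification above is shorter and avoids re-deriving the operator-to-return correspondence.
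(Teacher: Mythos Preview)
Your proof is correct and takes a genuinely different route from the paper. The paper works forward from the fixed point $Q^\pi_\beta$ itself: after observing (exactly as you do) that $\pi\in\Pi_\beta$ implies $\E_{a\sim\pi(\cdot|s)}[\mathcal{T}^\pi_\beta Q^\pi_\beta(s,a)] = \E_{a\sim\pi(\cdot|s)}[\mathcal{T}^\pi Q^\pi_\beta(s,a)]$, it repeatedly unrolls the Bellman recursion for a supported $(s,a)$ to obtain the infinite discounted sum $\E_\pi[\sum_{i\ge 0}\gamma^i r(s_i,a_i)\mid s_0=s,a_0=a] = Q^\pi(s,a)$ --- i.e., precisely the trajectory argument you mention at the end as an alternative. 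You instead guess the candidate $\tilde Q$, verify $\mathcal{T}^\pi_\beta\tilde Q=\tilde Q$ in one step, and invoke the uniqueness afforded by Theorem~\ref{the:contraction}. Your approach is shorter and avoids the infinite unrolling; the paper's approach never needs to posit the answer in advance and makes the operator-to-return correspondence explicit. Both hinge on the same key fact: a behavior-supported policy never samples the clipped $Q_\text{min}$ branch, so the two Bellman operators agree along $\pi$-trajectories.
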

Corollary \ref{cor:supported_fixed_point} indicates that the behavior-supported policy evaluation provides the same unbiased Q-values for all ID actions as the standard Bellman operator $\mathcal{T}^\pi$.
Together with Corollary \ref{coro:supported_optimization}, for any state $s$, the optimal behavior-supported action is the same under the $Q^\pi_\beta$ and $Q^\pi$ value functions during behavior-supported policy iteration. This ensures that the method in \Eqref{eq:supported_po} converges to the optimal behavior-supported policy $\pi^*_\beta$. Specifically, the following theorem holds:

\begin{restatable}[Monotonicity to Optimality]{theorem}{monotonicityOptimality}\label{coro:monotonicity_optimality}
  The behavior-supported policy optimization method mentioned in \Eqref{eq:supported_po} results in a strictly monotonic improvement of policy performance, continuing until the optimal behavior-supported policy $\pi^*_\beta$ is attained.
\end{restatable}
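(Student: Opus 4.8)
The plan is to recognize \Eqref{eq:supported_po} as classical policy iteration run inside the \emph{behavior-supported MDP} — the MDP with the same state space, transitions, reward and discount as $\mathcal M$, augmented so that taking any OOD action at a state $s$ transitions to an absorbing sink with per-step reward $r_\text{min}$ (so that its return equals $Q_\text{min}$) — and then to carry out the standard policy-iteration argument (performance difference identity plus greedy improvement), using Corollaries~\ref{coro:supported_optimization} and~\ref{cor:supported_fixed_point} to translate between the regularized quantities $Q^\pi_\beta, V^\pi_\beta, A^\pi_\beta$ and the ordinary ones $Q^\pi, V^\pi, A^\pi$.

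First I would record what the earlier results give along the iteration. By Corollary~\ref{coro:supported_optimization} every iterate obeys $\pi_k\in\Pi_\beta$, so by Corollary~\ref{cor:supported_fixed_point} we have $Q^{\pi_k}_\beta(s,a)=Q^{\pi_k}(s,a)$ for every behavior-supported $a$; hence $V^{\pi_k}_\beta(s)=\E_{a\sim\pi_k(\cdot|s)}[Q^{\pi_k}_\beta(s,a)]=V^{\pi_k}(s)$, and $A^{\pi_k}_\beta(s,a)=A^{\pi_k}(s,a)$ on $\texttt{supp}(\beta(\cdot|s))$, with $\E_{a\sim\pi_k(\cdot|s)}[A^{\pi_k}_\beta(s,a)]=0$ for all $s$. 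Next I would unfold the objective of \Eqref{eq:supported_po}: by the factorization of the trajectory distribution it equals $\frac{1}{1-\gamma}\E_{s\sim d^{\pi_k}}\E_{a\sim\pi(\cdot|s)}[A^{\pi_k}_\beta(s,a)]$, where $d^{\pi_k}$ is the discounted state-visitation distribution of $\pi_k$, so a maximizer may be chosen to act, at every state, greedily with respect to $Q^{\pi_k}_\beta$ (this is consistent with the $\arg\max$ on states visited by $\pi_k$ and is a legitimate choice elsewhere). Since OOD actions have $Q^{\pi_k}_\beta(s,\cdot)=Q_\text{min}\le V^{\pi_k}_\beta(s)$ while the $\pi_k$-average of the supported $Q$-values equals $V^{\pi_k}_\beta(s)$, a greedy action is always supported; this re-derives $\pi_{k+1}\in\Pi_\beta$ and yields $\E_{a\sim\pi_{k+1}(\cdot|s)}[A^{\pi_k}_\beta(s,a)]=\max_a A^{\pi_k}_\beta(s,a)\ge 0$ for every $s$.

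Then I would invoke the performance difference identity, $\mathcal J(\pi_{k+1})-\mathcal J(\pi_k)=\frac{1}{1-\gamma}\E_{s\sim d^{\pi_{k+1}}}\E_{a\sim\pi_{k+1}(\cdot|s)}[A^{\pi_k}(s,a)]$. Because $\pi_{k+1}\in\Pi_\beta$ places mass only on supported actions, where $A^{\pi_k}=A^{\pi_k}_\beta$, the integrand equals $\max_a A^{\pi_k}_\beta(s,a)\ge 0$, so $\mathcal J(\pi_{k+1})\ge\mathcal J(\pi_k)$; the inequality is strict unless $\max_a A^{\pi_k}_\beta(s,a)=0$ for $d^{\pi_{k+1}}$-almost every $s$, which is precisely the Bellman optimality equation of the behavior-supported MDP, forcing $\pi_k$ to already realize $\mathcal J(\pi^*_\beta)$. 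Finally, $\{\mathcal J(\pi_k)\}$ is nondecreasing and bounded above (the reward is bounded), hence convergent; since the scheme is exactly classical policy iteration on the behavior-supported MDP — whose evaluation step has a unique fixed point $Q^{\pi_k}_\beta$ by Theorem~\ref{the:contraction} — its fixed points are optimal, and the iteration converges to $\pi^*_\beta$.

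The main obstacle — as in every policy-iteration proof — is the mismatch between the state distribution $d^{\pi_k}$ weighting the surrogate in \Eqref{eq:supported_po} and the distribution $d^{\pi_{k+1}}$ appearing in the performance difference identity: a priori the improvement is certified only on states $\pi_k$ visits, whereas $\pi_{k+1}$ may reach new states. The resolution is to take the $\arg\max$ to be globally greedy with respect to $Q^{\pi_k}_\beta$ (admissible because \Eqref{eq:supported_po} constrains $\pi_{k+1}$ only on $\texttt{supp}(d^{\pi_k})$), after which greediness, and hence nonnegativity of $A^{\pi_k}_\beta(s,\cdot)$ under $\pi_{k+1}$, holds at every reachable state. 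A secondary point — making the notions of strictness and "convergence to $\pi^*_\beta$" precise in a large policy class — is handled cleanly by the reduction to classical policy iteration on the behavior-supported MDP.
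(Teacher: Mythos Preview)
Your proposal is correct and follows essentially the same route as the paper: both arguments apply the performance difference lemma to $\mathcal J(\pi_{k+1})-\mathcal J(\pi_k)$, use Corollaries~\ref{coro:supported_optimization} and~\ref{cor:supported_fixed_point} to identify $A^{\pi_k}$ with $A^{\pi_k}_\beta$ on supported actions, and conclude monotonicity from greediness of $\pi_{k+1}$; for the fixed-point characterization, your Bellman optimality equation for the ``behavior-supported MDP'' is exactly the paper's optimality operator $\mathcal T^*_\beta$. One presentational difference worth noting: you explicitly flag and resolve the mismatch between $d^{\pi_k}$ (in the surrogate \Eqref{eq:supported_po}) and $d^{\pi_{k+1}}$ (in the performance difference identity) by extending the $\arg\max$ to be globally greedy, whereas the paper sidesteps this by asserting $d^{\pi_{k+1}}_\mu(s)>0$ for all $s\in\mathcal S$ without justification---your treatment is the cleaner one here.
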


\textbf{Notably}, all proofs of this section are provided in the Appendix \ref{app:analysis}.

\section{Implementation: Behavior-Supported Policy Optimization} \label{sec:implementation}

We introduce the behavior-supported method that uses regularized Q-values to guide the policy iteration within the ID region of the reward model.
In this section, we present an implementation of the \textit{Behavior-Supported Policy Optimization} (BSPO) algorithm for the RLHF training of LLMs.

\begin{figure}[t]
  \centering
  \includegraphics[width=\linewidth]{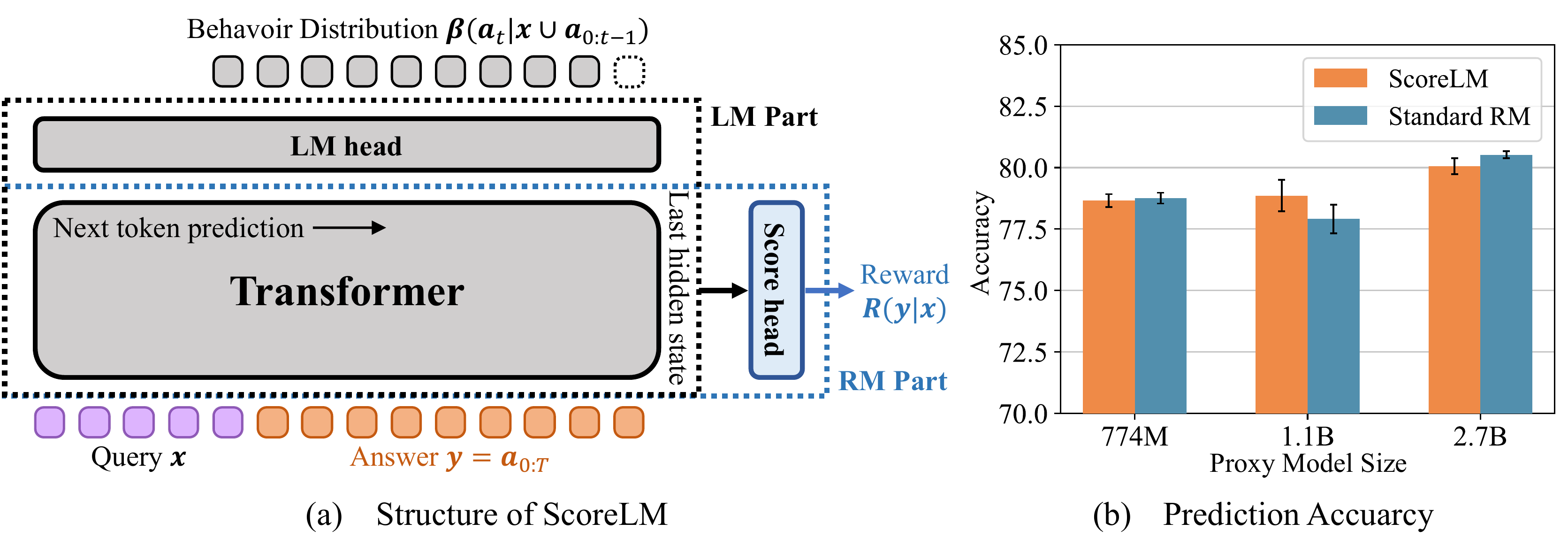}
  \caption{\textbf{(a)} {\color{revision}\textbf{Structure of our ScoreLM model.}} We retain the original language model head to predict the next-token distribution and initialize a score head to predict the reward.
  \textbf{(b) Compare with Standard RM.}
  The performance of ScoreLM is comparable to standard reward models under three scales on the test set.
  The short vertical lines indicate the standard deviation of four repetitions.}
  \label{fig:scorelm}
\end{figure}
\paragraph{Behavior Distribution Prediction}
Since pre-trained LLMs have already been exposed to an extensive amount of data, we can directly fit the distribution of the next token in the reward training dataset without introducing additional OOD response information.
In the standard RLHF paradigm, the reward model is derived by converting the LLMs, replacing the language model head with a scalar head for scoring.
In our work, we retain the original language model head, as shown in Figure \ref{fig:scorelm}, to predict the behavior distribution.
Specifically, we train this \textit{ScoreLM} model by minimizing the following loss function, which consists of two components:
\begin{equation}\label{eq:scorelm}
  \begin{aligned}
    \mathcal{L}_\text{ScoreLM}(\phi;\mathcal{D}) =
    -\underbrace{\E_{\mathcal{D}}\Big[\log\sigma\Big(R(x,y_w;\phi)-R(x,y_l;\phi)\Big)\Big]}_\text{Preference Loss}
    - \alpha \underbrace{\E_{\mathcal{D}}\Big[\log\beta(a_t|x\cup a_{0:t-1};\phi)\Big]}_\text{Supervised Loss},
  \end{aligned}
\end{equation}
where $\alpha$ is a hyperparameter to balance the two losses.
We integrate reward prediction and behavior distribution prediction into the same model for two main reasons.
First, experimental results indicate that this integration has almost no impact on prediction accuracy, as shown in Figure \ref{fig:scorelm}(b).
Additionally, the supervised learning loss helps preserve the language capabilities of the transformer, thereby improving the generalization of the reward model \citep{yang2024regularizing}.
Second, compared to standard PPO, using ScoreLM introduces negligible additional memory and computational overhead.

\paragraph{Behavior-Supported Value Function} 
In our implementation of value regularization, we predict behavior-supported V-values instead of Q-values to achieve greater stability while maintaining equivalent policy evaluation.
The deterministic state transition of the token-level MDP ({\color{revision}i.e., given a state $s=a_{0:t-1}$ and a next token $a_t$, the next state $s'=a_{0:t}$ is determined}) ensures this equivalence.
{\color{revision} It is important to clarify that this determinism inherently holds due to the auto-regressive nature of next-token prediction in LLMs.}
Specifically, the behavior-supported Bellman V-operator is
\begin{equation} \label{eq:supported_v_bellman}
  \mathcal{T}_{\beta,V}^\pi V(x\cup a_{0:t-1}) \doteq \begin{cases}
    \mathcal{T}_{V}^\pi V(x\cup a_{0:t-1}),                                 & \text{if}\quad\beta(a_{t-1}|x\cup a_{0:t-2}) > 0, \\
    \frac{1}{\gamma}\left[Q_\text{min} - r(x\cup a_{0:t-2},a_{t-1})\right], & \text{otherwise},
  \end{cases}
\end{equation}
where $\mathcal{T}^\pi_{V} V(x\cup a_{0:t-1}) \doteq \E_{a_t\sim\pi(\cdot|x\cup a_{0:t-1})}\left[r(x\cup a_{0:t-1},a_{t}) + \gamma V(x\cup a_{0:t})\right]$ is the standard Bellman operator for the V-value function. 
Then, the following theorem and corollary hold:
\begin{restatable}[Contraction of $\mathcal{T}^\pi_{\beta,V}$]{theorem}{vContraction}\label{the:v_contraction}
  For any policy $\pi \in \Pi$, the operator $\mathcal{T}^\pi_{\beta,V}$ is $\gamma$-contraction with respect to the $\mathcal{L}_\infty$ norm over the state space $\mathcal{S}$.
\end{restatable}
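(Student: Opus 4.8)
The plan is to prove the contraction directly from the definition by a pointwise comparison at each state, rather than routing through the $Q$-version. Fix a policy $\pi\in\Pi$ and two bounded value functions $V_1,V_2:\mathcal{S}\to\mathbb{R}$. The key structural observation I would make first is that the case split in \Eqref{eq:supported_v_bellman} is determined entirely by the state $s = x\cup a_{0:t-1}$ itself — concretely, by whether its final token $a_{t-1}$ is behavior-supported at the preceding state $x\cup a_{0:t-2}$ — and not by any separately supplied action argument. Hence at every $s$ the images $\mathcal{T}^\pi_{\beta,V}V_1(s)$ and $\mathcal{T}^\pi_{\beta,V}V_2(s)$ lie in the \emph{same} branch, so there is no mismatched-case cross term to control.

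Next I would bound $|\mathcal{T}^\pi_{\beta,V}V_1(s) - \mathcal{T}^\pi_{\beta,V}V_2(s)|$ in each branch. In the unsupported branch the value $\frac{1}{\gamma}[Q_\text{min} - r(x\cup a_{0:t-2},a_{t-1})]$ is independent of $V$, so the difference is exactly $0 \le \gamma\|V_1-V_2\|_\infty$. In the supported branch $\mathcal{T}^\pi_{\beta,V}V = \mathcal{T}^\pi_V V$, and using the deterministic transition $s'=T(s,a_t)$ the reward terms cancel, leaving
\[
  \big|\mathcal{T}^\pi_V V_1(s) - \mathcal{T}^\pi_V V_2(s)\big| = \gamma\left|\E_{a_t\sim\pi(\cdot|s)}\big[V_1(s') - V_2(s')\big]\right| \le \gamma\,\E_{a_t\sim\pi(\cdot|s)}\big[|V_1(s') - V_2(s')|\big] \le \gamma\,\|V_1-V_2\|_\infty,
\]
using the triangle inequality for expectations. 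Taking the supremum over $s\in\mathcal{S}$ then yields $\|\mathcal{T}^\pi_{\beta,V}V_1 - \mathcal{T}^\pi_{\beta,V}V_2\|_\infty \le \gamma\|V_1-V_2\|_\infty$, which is the claim.

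I do not expect a genuine obstacle here; the only point needing care is the bookkeeping of the case split — verifying it is a function of the state alone (including fixing a convention for the initial state $s_0=x$, which has no preceding token and should be classified as supported) so that the two branches can be handled independently and the bound holds uniformly in $s$. As an alternative I could derive this theorem from Theorem \ref{the:contraction} by using the deterministic token-level transitions to identify the $V$-iterates of $\mathcal{T}^\pi_{\beta,V}$ with the $Q$-iterates of $\mathcal{T}^\pi_\beta$; but the direct pointwise argument above is shorter and self-contained, so that is the route I would take.
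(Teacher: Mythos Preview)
Your proposal is correct and follows essentially the same route as the paper's own proof: a direct pointwise case split on the state, noting that both images land in the same branch, with the unsupported branch giving a zero difference and the supported branch reducing to the standard Bellman $V$-operator contraction via the triangle inequality for expectations. Your explicit remark that the branch is determined solely by the state (and your care about the convention at $s_0=x$) is a slight sharpening of the bookkeeping, but the argument is otherwise identical to the paper's.
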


\begin{restatable}[Equivalent Policy Evaluation]{corollary}{equivalentPolicyEval}\label{cor:equivalent_policy_eval}
  $\forall\pi\in\Pi$, denote $V^\pi_\beta$ as the fixed point of the Bellman operator $\mathcal T^\pi_{\beta,V}$ in \Eqref{eq:supported_v_bellman}. Then, its corresponding state-action value function $Q^\pi_\beta(s,a) = r(s,a) + \gamma V^\pi_\beta(T(s,a))$ is equal to the fixed point of the Bellman operator $\mathcal T^\pi_\beta$ in \Eqref{eq:supported_bellman}.
\end{restatable}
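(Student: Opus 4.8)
The plan is to leverage the uniqueness of the fixed point of $\mathcal{T}^\pi_\beta$ guaranteed by Theorem~\ref{the:contraction}. Set $\tilde Q(s,a) \doteq r(s,a) + \gamma V^\pi_\beta(T(s,a))$, where $V^\pi_\beta$ is the (unique, by Theorem~\ref{the:v_contraction}) fixed point of $\mathcal{T}^\pi_{\beta,V}$. It then suffices to verify that $\tilde Q$ is \emph{a} fixed point of $\mathcal{T}^\pi_\beta$, i.e.\ $\mathcal{T}^\pi_\beta\tilde Q = \tilde Q$ pointwise on $\mathcal{S}\times\mathcal{A}$; uniqueness then forces $\tilde Q = Q^\pi_\beta$. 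Note first that $T(s,a)$ always carries at least one generated token, so it is never the prompt-only initial state and the V-operator is well-posed there; moreover the action that produced the state $s'\doteq T(s,a)=x\cup a_{0:t}$ (with $s=x\cup a_{0:t-1}$, $a=a_t$) is exactly $a$, which is precisely the token inspected by the branch condition of \Eqref{eq:supported_v_bellman} evaluated at $s'$.

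I would then split on whether $a$ is behavior-supported at $s$. In the case $\beta(a|s)>0$: by definition $\mathcal{T}^\pi_\beta\tilde Q(s,a)=\mathcal{T}^\pi\tilde Q(s,a)=r(s,a)+\gamma\,\E_{a'\sim\pi(\cdot|s')}[\tilde Q(s',a')]$. Substituting the definition of $\tilde Q$ inside the expectation and using the determinism of $T$, the inner expectation is exactly $\mathcal{T}^\pi_V V^\pi_\beta(s')$. Since the incoming action to $s'$ is supported, the first branch of \Eqref{eq:supported_v_bellman} applies, so $\mathcal{T}^\pi_{\beta,V}V^\pi_\beta(s')=\mathcal{T}^\pi_V V^\pi_\beta(s')$, which equals $V^\pi_\beta(s')$ because $V^\pi_\beta$ is the fixed point of $\mathcal{T}^\pi_{\beta,V}$. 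Hence $\mathcal{T}^\pi_\beta\tilde Q(s,a)=r(s,a)+\gamma V^\pi_\beta(s')=\tilde Q(s,a)$.

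In the case $\beta(a|s)=0$: by definition $\mathcal{T}^\pi_\beta\tilde Q(s,a)=Q_\text{min}$, so I must show $\tilde Q(s,a)=Q_\text{min}$. Now the incoming action to $s'$ is unsupported, so the second branch of \Eqref{eq:supported_v_bellman} combined with the fixed-point property of $V^\pi_\beta$ gives $V^\pi_\beta(s')=\frac{1}{\gamma}\left[Q_\text{min}-r(s,a)\right]$, whence $\tilde Q(s,a)=r(s,a)+\gamma\cdot\frac{1}{\gamma}\left[Q_\text{min}-r(s,a)\right]=Q_\text{min}$. Combining the two cases yields $\mathcal{T}^\pi_\beta\tilde Q=\tilde Q$, and Theorem~\ref{the:contraction} closes the argument.

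The main obstacle I expect is purely bookkeeping about the index shift between the two operators: the Q-operator's support check is on the pair $(s,a)$, whereas the V-operator's support check at a state is on the action that produced that state. The proof hinges on the observation that after one Bellman backup one lands at $s'=T(s,a)$, whose producing action is $a$, so the branch taken by $\mathcal{T}^\pi_{\beta,V}$ at $s'$ is governed by the very same condition $\beta(a|s)\gtrless 0$ used by $\mathcal{T}^\pi_\beta$ at $(s,a)$. Once that alignment is pinned down — and one notes that the determinism of $T$ collapses $\E_{a'\sim\pi(\cdot|s')}[\tilde Q(s',a')]$ exactly into a V-backup $\mathcal{T}^\pi_V V^\pi_\beta(s')$ — everything else is direct substitution.
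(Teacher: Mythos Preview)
Your proposal is correct and follows essentially the same approach as the paper's proof: define the candidate $Q$ from the $V$-fixed point, split on whether the action is supported, and verify in each branch that the candidate satisfies $\mathcal{T}^\pi_\beta\tilde Q=\tilde Q$, then invoke uniqueness from Theorem~\ref{the:contraction}. Your explicit articulation of the index-alignment (that the branch condition of $\mathcal{T}^\pi_{\beta,V}$ at $s'=T(s,a)$ is governed by exactly the same $\beta(a\mid s)$ appearing in $\mathcal{T}^\pi_\beta$ at $(s,a)$) is the crux of the argument and matches the paper's computation line-for-line; the paper additionally verifies the converse direction (starting from the $Q$-fixed point and recovering the $V$-fixed point), which is not required for the corollary as stated.
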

The proofs of Theorem \ref{the:v_contraction} and Corollary \ref{cor:equivalent_policy_eval} are provided in the Appendix \ref{app:implementation}.
These results guarantee that the $\mathcal{T}_{\beta,V}^\pi V$ converges to a unique fixed point, $ V^\pi_\beta $, which provides policy evaluation equivalent to that of the behavior-supported Q-values, $ Q^\pi_\beta$. For a parameterized critic model $V_{\varphi}(x\cup a_{0:t-1})$, we train it by {\color{revision} minimizing the loss function with $\mathcal{T}^\pi_{\beta,V}V_\varphi$ calculated from ScoreLM $R_\phi$} at step $k$:
\begin{equation}
  \mathcal{L}_V(\varphi;\pi) = \E_{\tau\sim\pi_k}\left[\left(V_\varphi(x\cup a_{0:t-1}) - \mathcal{T}_{\beta,V}^{\pi_k} V_\varphi(x\cup a_{0:t-1})\right)^2\right].
\end{equation}
\paragraph{Behavior-Supported Policy Optimization}
We combine the behavior-supported method with the widely used Proximal Policy Optimization (PPO) \citep{schulman2017proximalpolicyoptimizationalgorithms} algorithm.
Specifically, for the parameterized LLM $\pi_\theta$, we optimize it by minimizing the following loss function at step $k$:
\begin{equation} \label{eq:loss_pi}
  \mathcal{L}_\pi(\theta) = - \E_{\tau\sim{\pi_{\theta_{k}}}} \left[
  \min\left(
  \rho_t(\theta)A^{\pi_{\theta_{k}}}(s_t,a_t),
  \text{clip}(\rho_t(\theta), 1-\epsilon, 1+\epsilon)A^{\pi_{\theta_{k}}}(s_t,a_t)
  \right)
  \right]
\end{equation}
where $\rho_t(\theta) = \frac{\pi_{\theta}(a_t|s_t)}{\pi_{\theta_{k}}(a_t|s_t)}$, and $A^{\pi_{\theta_{k}}}$ represents the advantage estimates, which are calculated based on the predictions from behavior-supported value model $V_{\phi_{k}}$.

\textbf{Importantly,} PPO ensures the accuracy of gradient prediction when reusing data by clipping the gradients where the policy ratio difference exceeds a certain threshold, thereby enhancing sample efficiency.
If we focus solely on the portion used to update the policy gradient, \Eqref{eq:loss_pi} provides an unbiased estimate of the optimization objective in \Eqref{eq:supported_po}.
Together with the equivalent policy evaluation demonstrated in Corollary \ref{cor:equivalent_policy_eval}, our implementation of the \textit{Behavior-Supported Policy Optimization} algorithm (BSPO) is theoretically consistent with the analysis in Section \ref{sec:analysis}.

Bringing everything together, the pseudo-code of our algorithm is shown in Algorithm \ref{alg:BSPO}.

\section{Experiments}
In this section, we present experiments to demonstrate the effectiveness of the BSPO algorithm. Specifically, we focus on the following three aspects:
\begin{itemize}[leftmargin=*]
  \item BSPO outperforms baseline algorithms, proving its capacity to better mitigate reward over-optimization and find the optimal in-distribution policies (Section \ref{sec:main_results}).
  \item BSPO reduces the generation of OOD responses during the RL, thereby avoiding overestimation caused by the extrapolation errors of the reward prediction (Section \ref{sec:ood_detection}).
  \item BSPO effectively avoids over-optimization at larger KL divergence distances (Section \ref{sec:kl_distance}).
\end{itemize}

\subsection{Experimental Settings}

To rigorously evaluate our algorithm, we compare it with five baseline methods across three proxy model scales in the synthetic setup. The specific experimental settings are as follows:
\paragraph{Synthetic Setup}

Given the high cost of human evaluations for studying reward over-optimization, we employ a widely used synthetic setup in over-optimization research \citep{gao2023scaling,moskovitz2023confronting,coste2023reward}. In this setup, labels are generated by a ``gold-standard" reward model (\textit{gold model}) rather than by humans. Meanwhile, the \textit{proxy model} is a proxy of the ground truth, trained to fit the labels from the gold model.

In our experiments, we train the gold model based on Llama3-8B \citep{dubey2024llama}.
For the proxy model, we utilize three smaller models: 774M (GPT-2-large \citep{radford2019language}), 1.1B (TinyLlama \citep{zhang2024tinyllamaopensourcesmalllanguage}), and 2.7B (ShearedLlama \citep{xia2023sheared}), employing the ScoreLM architecture as described in Section \ref{sec:implementation}.
The gold model is trained using 57k preference pairs from the binarized UltraFeedback dataset \citep{cui2023ultrafeedback}.
The gold model re-annotates 30k data points, which are then used for training the proxy models, whose training curves are provided in Appendix \ref{app:training_scorelm}.
During RL, Alpaca-7B \citep{alpaca} is employed as the initial actor model. All algorithms are trained using the same proxy model and evaluated against the same gold model.

\paragraph{Baselines}
We implement five baseline algorithms for comparison (details in Appendix \ref{sec:baselines}):
\begin{itemize}[leftmargin=*]
  \item \texttt{Standard PPO}: The standard implementation of Proximal Policy Optimization (PPO) \citep{schulman2017proximalpolicyoptimizationalgorithms} without the KL penalty in the LLM RLHF \citep{ouyang2022traininglanguagemodelsfollow}.
  \item \texttt{KL-Penalty}: Add a per-token KL penalty to the standard PPO algorithm  \citep{gao2023scaling}.
  \item \texttt{CPPO}: Constrained PPO identifies potentially over-optimized proxy points through experimentation and constrains the reward value to be below those points during RL \citep{moskovitz2023confronting}.
  \item \texttt{ENS-UWO \& ENS-WCO}: The ensemble baseline (ENS) integrates four reward models, utilizing variance for uncertainty-weighted optimization (UWO) or employing the worst reward for conservative worst-case optimization (WCO) \citep{eisenstein2023helping, coste2023reward}.
\end{itemize}

\subsection{Main Results} \label{sec:main_results}

\begin{figure}[t]
  \centering
  \vspace{-0.25em}
  \includegraphics[width=\linewidth]{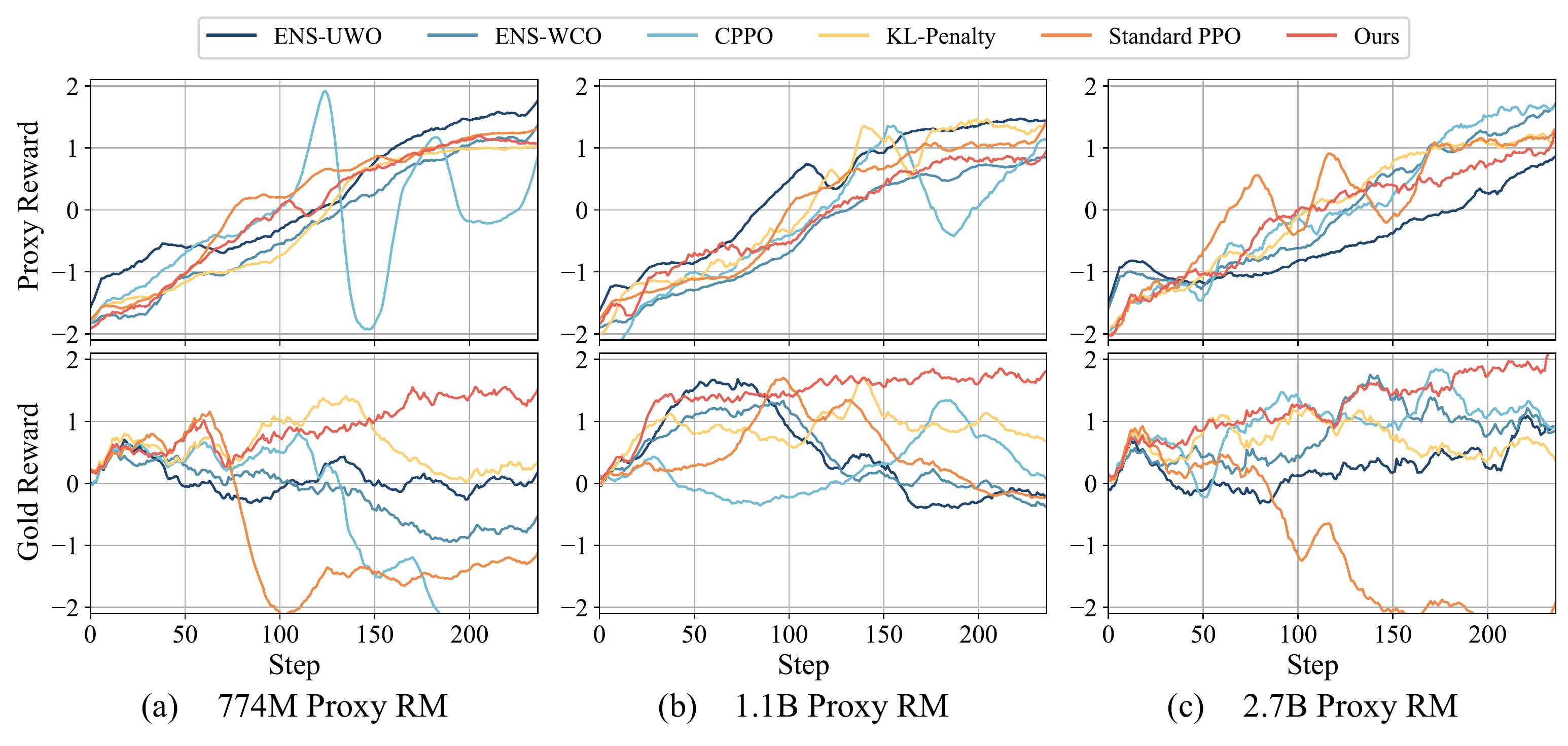}
  \vspace{-1.6em}
  \caption{\textbf{Main results.} The training curves of various algorithms across three experimental settings show upward trends in proxy rewards. Most baselines suffer from reward over-optimization. In contrast, our BSPO algorithm effectively mitigates this issue and achieves the highest gold reward.}
  \label{fig:main}
\end{figure}

Figure \ref{fig:main} presents the training curves of different algorithms across three proxy model scales.
Our BSPO algorithm effectively mitigates reward over-optimization across all parameter sizes, as evidenced by the consistent upward trends in both the \textit{proxy reward} and the \textit{gold reward}.
One possible explanation is that BSPO reduces the OOD generation, thereby minimizing extrapolation errors from the proxy model, which we empirically validate in Section \ref{sec:ood_detection}.
We also observe that BSPO more reliably finds the optimal policies achieving the highest gold reward.

In contrast, most baseline algorithms continue to suffer from the reward over-optimization problem.
First, the inability of ensemble methods to handle OOD responses, as noted in \cite{eisenstein2023helping}, remains a concern.
These methods mitigate overestimation within the ID setting, but for OOD responses, multiple models may still consistently overestimate.
As the policy iteration, the frequency of OOD responses tends to increase, further amplifying this issue.
Second, methods that apply KL penalties or maximum constraints to the reward can be viewed as limiting the policy search to a specific region.
However, unlike BSPO, these methods do not explicitly model the ID region of the reward model, and thus may only mitigate over-optimization in the overlapping region (typically near the initial policy).
{\color{revision} Third, while the constraints of CPPO successfully keep the proxy reward near the threshold, the gold reward continues to change. Therefore, relying solely on the proxy reward may be insufficient to establish a threshold that indicates when over-optimization occurs.}
Furthermore, CPPO introduces instability inherent to the Lagrangian approach \citep{platt1987constrained}.

Finally, to further validate the results, we evaluate the final model on the test set and calculate its win rate against the initial Alpaca-7B, as shown in Figure \ref{fig:ablation}(a). Unlike other baseline methods that adjust the sequence-wise reward, BSPO penalizes action-wise OOD values without affecting ID ones, allowing it to find the same optimal ID policies as standard policy evaluation.
This may explain why the model trained with BSPO outperforms the other baselines shown in the figure.

\begin{figure}[t]
  \centering
  \vspace{-0.3em}
  \includegraphics[width=\linewidth]{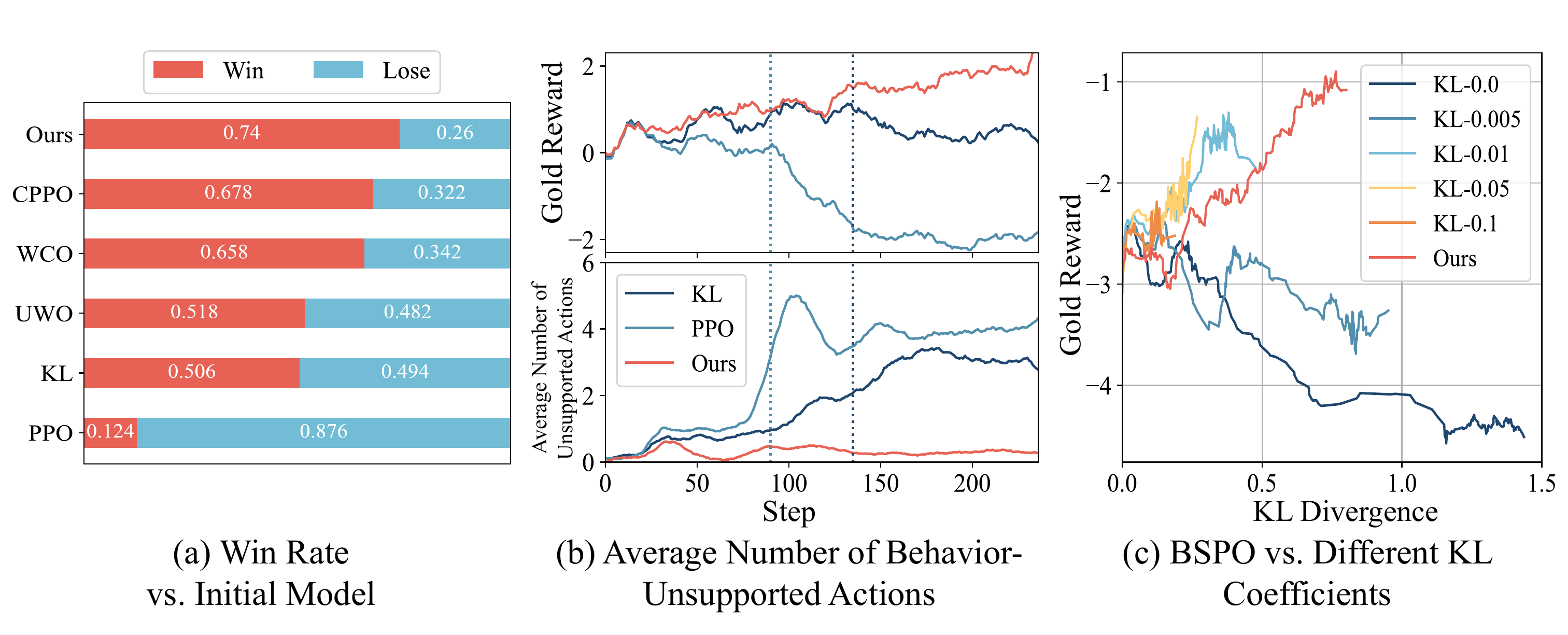}
  \vspace{-1.6em}
  \caption{\textbf{(a) Win Rates.} The win rates of different algorithms against the initial SFT model using the 2.7B proxy model.
  \textbf{(b) Count behavior-unsupported actions during RL.}
  We track the average number of actions (tokens) that are not supported by the behavior policy for each response during RL.
  Over-optimization in traditional methods leads to a sharp rise in unsupported actions, as shown by the dashed line, while our BSPO algorithm keeps these actions consistently low during training.
  \textbf{(c) Compare with KL penalty.} Compared with the KL penalty method using different penalty coefficients, BSPO effectively avoids reward over-optimization at larger KL divergence distances.}
  \label{fig:ablation}
\end{figure}

\subsection{Ablation on the Behavior Supported Prediction} \label{sec:ood_detection}
We employ the behavior policy $\beta$, which denotes the next token distribution in the reward model's training dataset, as a method for OOD detection of the reward model.
To experimentally illustrate how BSPO mitigates the issue of over-optimization, we track the average number of actions (tokens) that are not supported by the behavior policy for each response during the RL process.

The results are shown in Figure \ref{fig:ablation}(b).
We observe that the occurrence of the over-optimization phenomenon in traditional methods is accompanied by a significant increase in the number of behavior-unsupported actions, as indicated by the dashed line.
In contrast, our BSPO algorithm maintains a consistently low number of such actions throughout the training process.
This supports one of our core insights: regularizing the value function reduces the generation of OOD responses, thereby avoiding the overestimation problem caused by the extrapolation error from OOD evaluation.

\subsection{Further Comparison with Distance Control} \label{sec:kl_distance}

We further highlight the necessity of modeling the ID region of the reward model.
Figure \ref{fig:ablation}(c) presents a comparison between the KL penalty method, using different penalty coefficients, and BSPO.
The KL penalty method can be seen as an approach that constrains the policy from deviating too far from the initial one under a distance measure \citep{gao2023scaling}.
The results show that BSPO avoids over-optimization even at larger KL divergence distances, whereas the KL penalty method fails at closer distances, only ensuring a consistent increase of gold reward within a proximal region.

This phenomenon suggests that the region defined by the distance constraint does not align with the ID region of the proxy model.
As a result, valid reward prediction without extrapolation error can only be guaranteed within the proximal region that is fully covered by the ID region (as the inner dashed circle in Figure \ref{fig:first_figure}(b)).
In contrast, our algorithm models the ID region directly through the behavior policy $\beta$, enabling us to relax the distance constraint and fully explore the entire region where the reward model can generate valid predictions.

\section{Conclusion and Discussion}

A primary cause of reward over-optimization is the extrapolation error that occurs when the reward model evaluates OOD responses.
However, current methods still fail to prevent the increasing generation of OOD responses during the RL process and are not effective at handling extrapolation errors from their reward prediction.
Furthermore, they apply reward regularization to pessimistically handle OOD responses which, meanwhile, introduces unintended changes to ID ones, potentially leading to suboptimal solutions.
In this work, we propose the \textit{Behavior-Supported Policy Optimization} (BSPO), a method grounded in two core insights.
First, we define \textit{behavior policy} as the next token distribution of the reward training dataset to model the ID region of the reward model.
Second, we introduce the behavior-supported Bellman operator to regularize the value function, penalizing all OOD values without impacting the ID ones.
It illustrates that BSPO iterates the policy in the ID region of the reward model, thereby avoiding the reward model to evaluate the OOD response.
Theoretically, we prove that BSPO ensures a monotonic improvement of the supported policy until convergence to the optimal ID policy, a guarantee that other methods lack.
Extensive experiments demonstrate that BSPO outperforms baselines and exhibits two key strengths.
First, BSPO reduces the generation of OOD responses during the RL, thereby avoiding overestimation caused by the extrapolation errors.
Second, BSPO effectively avoids reward over-optimization at larger distances, facilitating the search for the optimal ID policy.

\subsection{Limitation and Future Work}

This study has several notable limitations. First, due to the unaffordable costs of human analyzing for studying the reward over-optimization, we employ the widely accepted synthetic setup \citep{gao2023scaling,coste2023reward,moskovitz2023confronting}.
However, there are some differences between the synthetic setup and real-world scenarios.
For example, model-based evaluations tend to exhibit higher consistency, while human preferences are subject to greater variability.
Therefore, further validation experiments in real-world settings are concerned.
{\color{revision} In Appendix \ref{app:non-synthetic}, We emulate human behavior by a LLM to conduct experiments in a realistic setting and engage in further discussions.}

Second, our algorithm primarily addresses the issue of reward over-optimization during the RL phase caused by OOD responses in evaluations \citep{eisenstein2023helping,laidlaw2024preventing,yang2024regularizing}, assuming the use of a well-trained reward model by default.
However, if the reward model fails to accurately capture preferences during the reward learning phase, this could also lead to deviations from human values \citep{miao2024mitigating}.
It is important to recognize that this issue parallels our concerns and is at a different stage of training.
{\color{revision}Another aspect to be considered additionally is the presence of an OOD prompt in the RL phase, as there may be no response for the reward model ID on these OOD prompt. This requires a further study of the data distribution shift between the reward model phase and the RL phase.}

Third, a current research direction in LLM alignment involves multiple reward models \citep{moskovitz2023confronting,NEURIPS2023_4dbb61cb,dai2023saferlhfsafereinforcement}.
Extending our algorithm to multi-objective scenarios to mitigate the over-optimization of each reward model presents a promising direction.

\newpage

\bibliography{iclr2025_conference}
\bibliographystyle{iclr2025_conference}

\newpage

\appendix

\section*{Ethics Statement}

This paper presents work that aims to advance the field of machine learning, specifically focusing on mitigating reward over-optimization in RLHF. There are many potential societal consequences of our work, none of which we feel must be specifically highlighted here.

\section{Supplementary Details of Analysis Section} \label{app:analysis}

\subsection{Proofs}

\paragraph{Notations.} For any state $s$, the supported action space of state $s$ is denoted as $\texttt{supp}\left(\beta(\cdot|s)\right)\doteq\left\{a\in\mathcal{A}\mid \beta(a|s)>0\right\}$. The set of all behavior-supported policies is denoted as
\begin{equation}
  \Pi_\beta \doteq \left\{
  \pi\in\Pi \mid \pi(a| s)=0 \quad, \forall a\notin \texttt{supp}\left(\beta(\cdot|s)\right)
  \right\}.
\end{equation}

We define $r_\text{min} \doteq \min_{s\in\mathcal{S}, a\in\mathcal{A}}\left[r(s,a)\right]$ and $Q_\text{min}\doteq \sum^\infty_{t=0}\gamma^t r_\text{min} = \frac{r_\text{min}}{1-\gamma}$. Then, we propose the following behavior-supported Bellman operator:
\begin{equation}\label{eq:app_supported_bellman}
  \mathcal{T}^\pi_\beta Q(s,a) \doteq \begin{cases}
    \mathcal{T}^\pi Q(s,a), & \text{if}\quad\beta(a|s) > 0, \\
    Q_\text{min},           & \text{otherwise}.
  \end{cases}
\end{equation}
where $\mathcal{T}^\pi Q(s,a) \doteq r(s,a) + \gamma \E_{a'\sim\pi(\cdot|s')}\left[Q(s',a')\right]$ is the traditional Bellman operator and $s'=T(s,a)$ is the next state. This new operator has the following properties:

\contraction*
\begin{proof}
  For any functions $f_1, f_2: \mathcal{S}\times\mathcal{A}\rightarrow\mathbb R$, any policy $\pi$, and $\forall s \in \mathcal{S}, a\in\mathcal{A}$, if $\beta(a|s)=0$ we have
  \begin{equation}
    \left|\mathcal{T}^\pi_\beta f_1(s,a) - \mathcal{T}^\pi_\beta f_2(s,a)\right| = \left| Q_\text{min} - Q_\text{min} \right| = 0 \leq \gamma \left\| f_1 - f_2 \right\|_\infty;
  \end{equation}
  else if $\beta(a|s)>0$ we have
  \begin{equation}
    \begin{aligned}
      \left|\mathcal{T}^\pi_\beta f_1(s,a) - \mathcal{T}^\pi_\beta f_2(s,a)\right|
       & = \left|\gamma \E_{a'\sim\pi(\cdot|s')}\left[f_1(s',a') - f_2(s',a')\right]\right|
      \\
       & \leq \gamma \E_{a'\sim\pi(\cdot|s')}\left|f_1(s',a') - f_2(s',a')\right|
      \\
       & \leq\gamma \max_{s\in\mathcal{S},a\in\mathcal{A}}\left|f_1(s,a) - f_2(s,a)\right|
      \\
       & =\gamma \left\| f_1 - f_2 \right\|_\infty
    \end{aligned}
  \end{equation}
  where $s'=T(s,a)$.

  Thus, we have $\left\|\mathcal{T}^\pi_\beta f_1 - \mathcal{T}^\pi_\beta f_2\right\|_\infty \leq \gamma \left\| f_1 - f_2 \right\|_\infty$, that is, $\mathcal{T}^\pi_\beta$ is $\gamma$-contraction.
\end{proof}

The $\gamma$-contraction property of the operator $\mathcal{T}^\pi_\beta$ guarantees that, for any initial Q-values, iterative application of $\mathcal{T}^\pi_\beta$ converges to a unique fixed point at a rate of $\gamma$. Thus, we find this fixed point by iteratively solving the problem, $\min_Q \E_{\tau\sim\pi}(Q(s,a) - \mathcal{T}^\pi_\beta Q(s,a))^2$, and use it for policy evaluation to optimize the policy $\pi$.

The fixed point $Q^\pi_\beta$ has the following properties:
\fixedPoint*
\begin{proof} By Theorem \ref{the:contraction}, $\mathcal{T}^\pi_\beta$ is $\gamma$-contraction. Assume the $Q^\pi_\beta$ is the fixed point, then we have
  \begin{equation}
    Q^\pi_\beta(s,a) = \mathcal{T}^\pi_\beta Q^\pi_\beta(s,a) = \begin{cases}
      \mathcal{T}^\pi Q^\pi_\beta(s,a), & \text{if}\quad\beta(a|s) > 0, \\
      Q_\text{min},                     & \text{otherwise}.
    \end{cases}
  \end{equation}
  where $s'=T(s,a)$.

  Denoting $(\hat s, \hat a)$ is the minimum point of $Q^\pi_\beta(s,a)$ when $\beta(a|s) > 0$, that is, we define $(\hat s, \hat a) \doteq \argmin_{s\in\mathcal S, a\in \text{supp}(\beta(\cdot|s))} Q^\pi_\beta(s,a)$. Then, we have
  \begin{equation} \label{eq:hat_sa}
    \begin{aligned}
      Q^\pi_\beta(\hat{s},\hat{a}) & = r(\hat s,\hat a) + \gamma \E_{a'\sim\pi(\cdot|{\color{revision}s'})}\left[Q^\pi_\beta(s',a')\right] \quad\quad\text{where}\quad s'\doteq T(\hat s,\hat a)
      \\
                                   & = r(\hat s,\hat a) + \gamma \left[
      \sum_{a'\in\text{supp}(\beta(\cdot|s'))} \pi(a'|s') \mathcal{T}^\pi Q^\pi_\beta(s',a')
      +
      \sum_{a'\notin\text{supp}(\beta(\cdot|s'))} \pi(a'|s') Q_\text{min}
      \right]
      \\
                                   & = r(\hat s,\hat a) + \gamma \left[
      \sum_{a'\in\text{supp}(\beta(\cdot|s'))} \pi(a'|s') Q^\pi_\beta(s',a')
      +
      \sum_{a'\notin\text{supp}(\beta(\cdot|s'))} \pi(a'|s') Q_\text{min}
      \right]
      \\
                                   & \geq r(\hat s,\hat a) + \gamma \left[
      \sum_{a'\in\text{supp}(\beta(\cdot|s'))} \pi(a'|s')
      \right]
      Q^\pi_\beta(\hat{s},\hat{a}) + \gamma\left [
      \sum_{a'\notin\text{supp}(\beta(\cdot|s'))} \pi(a'|s')
      \right] Q_\text{min}
    \end{aligned}
  \end{equation}

  The last inequality in \Eqref{eq:hat_sa} holds since $(\hat s, \hat a)$ is the minimum point of $Q^\pi_\beta(s,a)$. To simplify the expression, we define $\lambda \doteq \sum_{a'\sim\text{supp}(\beta(\cdot|s'))} \pi(a'|s')$. Then, $1 - \lambda = \sum_{a'\notin\text{supp}(\beta(\cdot|s'))} \pi(a'|s')$. Bringing it into the above inequality and since $\gamma\in[0,1),\lambda\in[0,1]$, we have
  \begin{equation}
    \begin{aligned}
      Q^\pi_\beta(\hat s,\hat a)                     & \geq r(\hat s,\hat a) + \gamma \lambda
      Q^\pi_\beta(\hat{s},\hat{a}) + \gamma (1-\lambda) Q_\text{min} \label{eq:test}
      \\
      (1-\gamma\lambda) Q^\pi_\beta(\hat{s},\hat{a}) & \geq r(\hat s,\hat a) + \gamma (1-\lambda) Q_\text{min}
      \\
      Q^\pi_\beta(\hat{s},\hat{a})                   & \geq \frac{r(\hat s,\hat a) + \gamma (1-\lambda) Q_\text{min}}{1-\gamma\lambda}
    \end{aligned}
  \end{equation}

  Since $r(\hat{s}, \hat{a}) \geq r_\text{min} = (1-\gamma)Q_\text{min}$, it follows that
  \begin{equation}
    Q^\pi_\beta(\hat{s},\hat{a}) \geq \frac{(1-\gamma)Q_\text{min} + \gamma (1-\lambda) Q_\text{min}}{1-\gamma\lambda} = Q_\text{min}
  \end{equation}

  Since $(\hat s, \hat a)$ is the minimum point of $Q^\pi_\beta(s,a)$, the following equation holds
  \begin{equation}\label{eq:f_vs_min}
    \forall s\in\mathcal{S}, a\in \text{supp}(\beta(\cdot|s)), \quad Q^\pi_\beta({s},{a}) \geq Q^\pi_\beta(\hat{s},\hat{a}) \geq Q_\text{min}
  \end{equation}
  Then, since $\forall s\in\mathcal{S}, a\notin \text{supp}(\beta(\cdot|s)), Q^\pi_\beta(s,a)=Q_\text{min}$, we have $\forall s\in\mathcal{S}, a\in\mathcal{A}, Q^\pi_\beta(s,a)\geq Q_\text{min}$.

  Next, we will prove the relationship between $Q^\pi_\beta({s},{a})$ and $Q^\pi(s,a)$.

  $\forall s\in\mathcal{S}, a\in\mathcal{A}$, we have
  \begin{equation}
    \begin{aligned}
      \mathcal{T}^\pi Q^\pi_\beta(s,a)
       & = r(s,a) + \gamma \E_{a'\sim\pi(\cdot|s')}\left[Q^\pi_\beta(s',a')\right]\quad\quad\text{where}\quad s'\doteq T(\hat s,\hat a)
      \\
       & \geq r(s,a) + \gamma \E_{a'\sim\pi(\cdot|s')}\left[Q_\text{min}\right]
      \\
       & =(1-\gamma)Q_\text{min} + \gamma Q_\text{min}
      \\
       & =Q_\text{min}
    \end{aligned}
  \end{equation}
  Thus, for any $s\in\mathcal{S},a\in\mathcal{A}$, we have
  \begin{equation}
    \begin{aligned}
      \mathcal{T}^\pi_\beta Q^\pi_\beta(s,a)
       & = \begin{cases}
             \mathcal{T}^\pi Q^\pi_\beta(s,a),                 & \text{if}\quad\beta(a|s) > 0, \\
             Q_\text{min}\leq\mathcal{T}^\pi Q^\pi_\beta(s,a), & \text{otherwise}
           \end{cases}
      \\
       & \leq \mathcal{T}^\pi Q^\pi_\beta(s,a)
    \end{aligned}
  \end{equation}
  $\forall s\in\mathcal{S}, a\in \text{supp}(\beta(\cdot|s))$, it holds that
  \begin{equation}\label{eq:f_vs_q}
    \begin{aligned}
      Q^\pi_\beta(s,a) & = \mathcal{T}_\beta^\pi Q^\pi_\beta(s,a)
      \\
                       & = \mathcal{T}^\pi Q^\pi_\beta(s,a)
      \\
                       & = r(s,a) + \gamma \E_{a'\sim\pi(\cdot|s')}\left[Q^\pi_\beta(s',a')\right]\quad\text{where }s'\doteq T(\hat s,\hat a)
      \\
                       & = r(s,a) + \gamma\E_{a'\sim\pi(\cdot|s')}\left[
        \mathcal{T}_\beta^\pi Q^\pi_\beta(s',a')
        \right]
      \\
                       & \leq r(s,a) + \gamma\E_{a'\sim\pi(\cdot|s')}\left[
        \mathcal{T}^\pi Q^\pi_\beta(s',a')
        \right]
      \\
                       & = r(s,a) + \gamma\E_{a'\sim\pi(\cdot|s')}\left[
        r(s',a')+\gamma\E_{a''\sim\pi(\cdot|s'')}\left[
          Q^\pi_\beta(s'',a'')
          \right]
        \right]\quad\text{where }s''\doteq T(\hat s',\hat a')
      \\
                       & = r(s,a) + \gamma\E_{a'\sim\pi(\cdot|s')}\left[
        r(s',a')+\gamma\E_{a''\sim\pi(\cdot|s'')}\left[
          \mathcal{T}_\beta^\pi Q^\pi_\beta(s'',a'')
          \right]
        \right]
      \\
                       & \leq r(s,a) + \gamma\E_{a'\sim\pi(\cdot|s')}\left[
        r(s',a')+\gamma\E_{a''\sim\pi(\cdot|s'')}\left[
          \mathcal{T}^\pi Q^\pi_\beta(s'',a'')
          \right]
        \right]
      \\
                       & \cdots
      \\
                       & \leq\E_\pi\left[\sum^\infty_{i=0}\gamma^ir(s_i,a_i)\mid s_0=s,a_0=a\right]
      \\
                       & =Q^\pi(s,a)
    \end{aligned}
  \end{equation}

  Combining \Eqref{eq:f_vs_min} and \Eqref{eq:f_vs_q}, we prove that $\forall \pi \in \Pi$, the fixed point $Q^\pi_\beta$ of $\mathcal{T}^\pi_\beta$ satisfies
  \begin{equation}
    \begin{cases}
      Q_\text{min} \leq Q^\pi_\beta(s,a) \leq Q^\pi(s,a), & \text{if}\quad\beta(a|s) > 0, \\
      Q^\pi_\beta(s,a) = Q_\text{min},                    & \text{otherwise}.
    \end{cases}
  \end{equation}
  \vspace{-1em}
\end{proof}

We refer to the fixed point $Q^\pi_\beta$ as the behavior-supported Q-value function.
It underestimates the future returns of OOD actions, thereby disadvantaging these actions across the action space.
Now, we consider a common policy iteration using behavior-supported value, as follows:
\begin{equation}
  \pi_{k+1} = \argmax_{\pi\in\Pi} \E_{\tau\sim{\pi_{k}}} \left[
    \frac{\pi(a_t|s_t)}{\pi_{k}(a_t|s_t)}A_\beta^{\pi_{k}}(s_t,a_t)
    \right].
\end{equation}
where $V_\beta^{\pi_{k}}(s) = \E_{a\sim\pi_{k}(\cdot|s)}\left[Q_\beta^{\pi_{k}}(s,a)\right]$, and $A_\beta^{\pi_{k}}(s,a) = Q_\beta^{\pi_{k}}(s,a) - V_\beta^{\pi_{k}}(s)$.
Furthermore, the following corollary guarantees that the policy optimization method in \Eqref{eq:supported_po} yields behavior-supported policies $\pi\in\Pi_\beta$ through regularized policy evaluation.
\supportedOptimization*
\begin{proof} For any $s\in\mathcal{S}$,
  \begin{equation}
    d^{\pi_k}_\mu(s) = d^{\pi_k}_\mu(x\cup a_{0:t-1}) = \sum_{x\in\mathcal{X}}\mu(x) \prod^{t-1}_{i=0}\pi_k(a_i|x\cup a_{0:i-1}).
  \end{equation}
Then, we have
  \begin{equation}
    \begin{aligned}
      \pi_{k+1}(a|s) & = \argmax_{\pi\in\Pi} \E_{\tau\sim{\pi_{k}}} \left[
        \frac{\pi(a_t|s_t)}{\pi_{k}(a_t|s_t)}A_\beta^{\pi_{k}}(s_t,a_t)
        \right],
      \\
                     & = \argmax_{\pi\in\Pi} \sum_{s\in\mathcal{S}} d^{\pi_k}_\mu(s) \sum_{a\in\mathcal{A}} \pi_k(a|s) \left[
        \frac{\pi(a|s)}{\pi_{k}(a|s)}A_\beta^{\pi_{k}}(s,a)
        \right],
      \\
                     & = \argmax_{\pi\in\Pi} \sum_{s\in\mathcal{S}} d^{\pi_k}_\mu(s) \sum_{a\in\mathcal{A}} {\pi(a|s)}\left[
        A_\beta^{\pi_{k}}(s,a)
        \right],
      \\
                     & = \mathbb{I}\left[
        a=\argmax_{a'\in\mathcal{A}} A^{\pi_k}_\beta(s,a')
        \right],
      \\
                     & = \mathbb{I}\left[
        a=\argmax_{a'\in\mathcal{A}} \left(
        Q^{\pi_k}_\beta(s,a') - V^{\pi_k}_\beta(s)
        \right)
        \right],
      \\
                     & = \mathbb{I}\left[
        a=\argmax_{a'\in\mathcal{A}} Q^{\pi_k}_\beta(s,a')
        \right].
    \end{aligned}
  \end{equation}

  Since $Q^{\pi_k}_\beta(s,a)$ is equal to the fixed point of the $\mathcal T^{\pi_k}_\beta$, it holds that, $\forall s\in \mathcal{S}, \forall a'\in \text{supp}(\beta(\cdot|s))$, $\forall a''\notin \text{supp}(\beta(\cdot|s))$,
  \begin{equation}
    Q^{\pi_k}_\beta(s,a')  \geq Q_\text{min} = Q^{\pi_k}_\beta(s,a'')
  \end{equation}
  where the inequality holds because of Theorem \ref{the:fixed_point}. Therefore, the maximum solution will only choose behavior-supported actions. It follows that
  \begin{equation}\label{eq:pi_k+1}
    \pi_{k+1}(a|s) = \mathbb{I}\left[
      a=\argmax_{a'\in\mathcal{A}} Q^{\pi_k}_\beta(s,a')
      \right] = \mathbb{I}\left[
    a=\argmax_{a'\in\text{supp}(\beta(\cdot|s)} Q^{\pi_k}_\beta(s,a')
    \right] \in \Pi_\beta
  \end{equation}

  In conclusion, for arbitrary policy $\pi_k$, the policy optimization method mentioned in \Eqref{eq:supported_po} yields a supported next policy $\pi_{k+1}\in\Pi_\beta$.
\end{proof}

Theoretically, we demonstrate that policy iteration with the regularized value function guarantees that each iteration results in a behavior-supported policy.
This restricts the search for the optimal policy to the ID region of the reward model.
As a result, the reward model does not need to evaluate trajectories generated by OOD actions, thereby preventing the over-optimization issue caused by extrapolation errors.
For behavior-supported policy iterations, the following corollary holds:
\supportedFixedPoint*

\begin{proof} By Theorem \ref{the:contraction}, $\mathcal{T}^\pi_\beta$ is $\gamma$-contraction. Assume the $Q^\pi_\beta$ is the fixed point, then we have
  \begin{equation}
    Q^\pi_\beta(s,a) = \mathcal{T}^\pi_\beta Q^\pi_\beta(s,a) = \begin{cases}
      \mathcal{T}^\pi Q^\pi_\beta(s,a), & \text{if}\quad\beta(a|s) > 0, \\
      Q_\text{min},                     & \text{otherwise}.
    \end{cases}
  \end{equation}
  where $s'=T(s,a)$.

  For all supported policy $\pi\in\Pi_\beta$, it holds that $\forall s\in\mathcal{S}, a\notin\text{supp}(\beta(\cdot|s), \pi(a|s)=0$.
  Then, we have
  \begin{equation}
    \begin{aligned}
      \E_{a\sim\pi(\cdot|s)}\left[\mathcal{T}^\pi_\beta Q^\pi_\beta(s,a)\right]
       & = \sum_{a\in\text{supp}(\beta(\cdot|s))} \pi(a|s) \mathcal{T}^\pi Q^\pi_\beta(s,a) + \sum_{a\notin\text{supp}(\beta(\cdot|s))} \pi(a|s) Q_\text{min}
      \\
       & = \sum_{a\in\text{supp}(\beta(\cdot|s))} \pi(a|s) \mathcal{T}^\pi Q^\pi_\beta(s,a)
      \\
       & = \sum_{a\in\text{supp}(\beta(\cdot|s))} \pi(a|s) \mathcal{T}^\pi Q^\pi_\beta(s,a) + \sum_{a\notin\text{supp}(\beta(\cdot|s))} \pi(a|s) \mathcal{T}^\pi Q^\pi_\beta(s,a)
      \\
       & =\E_{a\sim\pi(\cdot|s)}\left[\mathcal{T}^\pi Q^\pi_\beta(s,a)\right]
    \end{aligned}
  \end{equation}
  Thus, $\forall s\in\mathcal{S}, a\in \text{supp}(\beta(\cdot|s))$, it holds that
  \begin{equation}\label{eq:f_eq_q}
    \begin{aligned}
      Q^\pi_\beta(s,a) & = \mathcal{T}_\beta^\pi Q^\pi_\beta(s,a)
      \\
                       & = \mathcal{T}^\pi Q^\pi_\beta(s,a)
      \\
                       & = r(s,a) + \gamma \E_{a'\sim\pi(\cdot|s')}\left[Q^\pi_\beta(s',a')\right]\quad\text{where }s'\doteq T(\hat s,\hat a)
      \\
                       & = r(s,a) + \gamma\E_{a'\sim\pi(\cdot|s')}\left[
        \mathcal{T}_\beta^\pi Q^\pi_\beta(s',a')
        \right]
      \\
                       & = r(s,a) + \gamma\E_{a'\sim\pi(\cdot|s')}\left[
        \mathcal{T}^\pi Q^\pi_\beta(s',a')
        \right]
      \\
                       & = r(s,a) + \gamma\E_{a'\sim\pi(\cdot|s')}\left[
        r(s',a')+\gamma\E_{a''\sim\pi(\cdot|s'')}\left[
          Q^\pi_\beta(s'',a'')
          \right]
        \right]\quad\text{where }s''\doteq T(\hat s',\hat a')
      \\
                       & = r(s,a) + \gamma\E_{a'\sim\pi(\cdot|s')}\left[
        r(s',a')+\gamma\E_{a''\sim\pi(\cdot|s'')}\left[
          \mathcal{T}_\beta^\pi Q^\pi_\beta(s'',a'')
          \right]
        \right]
      \\
                       & = r(s,a) + \gamma\E_{a'\sim\pi(\cdot|s')}\left[
        r(s',a')+\gamma\E_{a''\sim\pi(\cdot|s'')}\left[
          \mathcal{T}^\pi Q^\pi_\beta(s'',a'')
          \right]
        \right]
      \\
                       & \cdots
      \\
                       & =\E_\pi\left[\sum^\infty_{i=0}\gamma^ir(s_i,a_i)\mid s_0=s,a_0=a\right]
      \\
                       & =Q^\pi(s,a)
    \end{aligned}
  \end{equation}
\end{proof}

Corollary \ref{cor:supported_fixed_point} indicates that the behavior-supported policy evaluation provides the same unbiased Q-values for all ID actions as the standard Bellman operator $\mathcal{T}^\pi$.
Together with Corollary \ref{coro:supported_optimization}, for any state $s$, the optimal behavior-supported action is the same under the $Q^\pi_\beta$ and $Q^\pi$ value functions during behavior-supported policy iteration.

Furthermore, we use the following lemma to prove the monotonicity of our method:

\begin{restatable}[Performance Difference, Lemma 6.1 in \citep{kakade2002approximately}]{lemma}{performance_difference}\label{lem:performance_difference}For any policies $\pi$ and $\pi'$ and any starting state distribution $\mu$,
  \begin{equation}
    \mathcal{J}(\pi') - \mathcal{J}(\pi) = \frac{1}{1-\gamma}\E_{s\sim d^{\pi'}_\mu, a\sim\pi'(\cdot|s)}\left[A^\pi(s,a)\right]
  \end{equation}
\end{restatable}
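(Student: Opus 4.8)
The plan is to prove this classical identity (due to \citep{kakade2002approximately}) by the standard telescoping argument, specialized to the token-level MDP in which transitions $s'=T(s,a)$ are deterministic. First I would rewrite the baseline term using the definitions of $V^\pi$ and $\mathcal{J}$, namely $\mathcal{J}(\pi) = \E_{s_0\sim\mu}[V^\pi(s_0)]$, which follows by conditioning the return on the initial state. The goal is then to re-express $\mathcal{J}(\pi')-\mathcal{J}(\pi)$ as a single discounted expectation over trajectories drawn from $\pi'$.

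The key step is a telescoping identity for $V^\pi$ evaluated along a trajectory $\tau=\{s_t,a_t\}_{t\ge0}\sim\pi'$ with $s_0\sim\mu$. Since $V^\pi$ is bounded and $\gamma<1$, the tail $\gamma^t V^\pi(s_t)\to0$, so
\begin{equation}
  \sum_{t=0}^\infty \gamma^t\left(\gamma V^\pi(s_{t+1}) - V^\pi(s_t)\right) = -V^\pi(s_0).
\end{equation}
Taking the expectation over $\tau\sim\pi'$ gives $-\mathcal{J}(\pi)$. Adding this to $\mathcal{J}(\pi')=\E_{\tau\sim\pi'}[\sum_t\gamma^t r_t]$ and regrouping term-by-term yields
\begin{equation}
  \mathcal{J}(\pi')-\mathcal{J}(\pi) = \E_{\tau\sim\pi'}\left[\sum_{t=0}^\infty \gamma^t\left(r_t + \gamma V^\pi(s_{t+1}) - V^\pi(s_t)\right)\right].
\end{equation}
Because transitions are deterministic, $s_{t+1}=T(s_t,a_t)$ and $r_t=r(s_t,a_t)$, so $r_t+\gamma V^\pi(s_{t+1})=Q^\pi(s_t,a_t)$ by the Bellman relation for $Q^\pi$; hence each bracketed term equals $Q^\pi(s_t,a_t)-V^\pi(s_t)=A^\pi(s_t,a_t)$.

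Finally I would convert the discounted time-sum into a state expectation. Interchanging summation and expectation (justified by boundedness of $A^\pi$ together with $\gamma<1$) and grouping by visited state gives $\sum_t \gamma^t \E[A^\pi(s_t,a_t)] = \sum_s\big(\sum_t\gamma^t \Pr(s_t=s\mid\pi',\mu)\big)\E_{a\sim\pi'(\cdot|s)}[A^\pi(s,a)]$. Identifying the normalized discounted state-visitation distribution $d^{\pi'}_\mu(s)=(1-\gamma)\sum_t\gamma^t\Pr(s_t=s\mid\pi',\mu)$ extracts the factor $\frac{1}{1-\gamma}$ and produces exactly the claimed expression $\frac{1}{1-\gamma}\E_{s\sim d^{\pi'}_\mu,\,a\sim\pi'(\cdot|s)}[A^\pi(s,a)]$. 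I do not expect a genuine obstacle here, as this is a textbook result; the only points requiring care are verifying that the telescoping tail vanishes and confirming that $d^{\pi'}_\mu$ is the \emph{normalized} discounted occupancy measure, so that the $\frac{1}{1-\gamma}$ prefactor appears with the correct constant.
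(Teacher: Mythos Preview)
Your proof is correct and follows the standard telescoping argument due to \citet{kakade2002approximately}. Note, however, that the paper does not supply its own proof of this lemma: it is stated as a cited result (``Lemma~6.1 in \citep{kakade2002approximately}'') and then invoked directly in the proof of Theorem~\ref{coro:monotonicity_optimality}. There is therefore nothing to compare against; your derivation simply fills in what the paper leaves to the reference, and does so in the expected way.
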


\monotonicityOptimality*

\begin{proof}
  First, we prove the monotonicity. Given a sequence of policies $\{\pi_i\}_{i=0}^\infty$ generated by the policy optimization method in \Eqref{eq:supported_po}. From Lemma \ref{lem:performance_difference}, for any $k\geq 0$, we have
  \begin{equation} \label{eq:monotonicity}
    \begin{aligned}
      \mathcal{J}(\pi_{k+1}) - \mathcal{J}(\pi_{k})
       & = \frac{1}{1-\gamma}\E_{s\sim d^{\pi_{k+1}}_\mu, a\sim\pi_{k+1}(\cdot|s)}\left[A^{\pi_k}(s,a)\right]
      \\
       & = \frac{1}{1-\gamma}\sum_{s\in\mathcal{S}} d^{\pi_{k+1}}_\mu(s) \sum_{a\in\mathcal{A}} \pi_{k+1}(a|s)  A^{\pi_k}(s,a)
      \\
       & = \frac{1}{1-\gamma}\sum_{s\in\mathcal{S}} d^{\pi_{k+1}}_\mu(s) \sum_{a\in\mathcal{A}} \pi_{k+1}(a|s) \left[ Q^{\pi_k}_\beta(s,a) - V^{\pi_k}_\beta(s) \right]
      \\
       & = \frac{1}{1-\gamma}\sum_{s\in\mathcal{S}} d^{\pi_{k+1}}_\mu(s) \left[
        \sum_{a\in\mathcal{A}} \pi_{k+1}(a|s)  Q^{\pi_k}_\beta(s,a) - V^{\pi_k}_\beta(s)
        \right]
      \\
       & = \frac{1}{1-\gamma}\sum_{s\in\mathcal{S}} d^{\pi_{k+1}}_\mu(s) \left[
        \sum_{a\in\mathcal{A}} \pi_{k+1}(a|s)  Q^{\pi_k}_\beta(s,a) - \sum_{a\in\mathcal{A}} \pi_{k}(a|s) Q^{\pi_k}_\beta(s)
        \right]
      \\
       & = \frac{1}{1-\gamma}\E_{s \sim d^{\pi_{k+1}}_\mu} \left[
      \E_{a\sim\pi_{k+1}(\cdot|s)}Q^{\pi_k}_\beta(s,a) - \E_{a\sim\pi_{k}(\cdot|s)}Q^{\pi_k}_\beta(s,a)
      \right]
      \\
       & \geq 0
    \end{aligned}
  \end{equation}
  where the last inequality holds since $\pi_{k+1}$ is the greedy policy with respect to $Q^{\pi_k}_\beta$, i.e., $\pi_{k+1}(a|s) = \mathbb{I}\left[a=\argmax_{a'\in\text{supp}(\beta(\cdot|s))}Q^{\pi_k}_\beta(s,a')\right]$.

  Second, we prove the convergence. For any $k \geq 0$, if $\mathcal J(\pi_{k+1}) - \mathcal J(\pi_k) = 0$, since $\forall s\in\mathcal S, d^{\pi_{k+1}}_\mu(s) > 0$, we have
  \begin{equation}\label{eq:equal_q_value}
    \E_{a\sim\pi_{k+1}(\cdot|s)}Q^{\pi_k}_\beta(s,a) = \E_{a\sim\pi_{k}(\cdot|s)}Q^{\pi_k}_\beta(s,a)
  \end{equation}
  Since $Q^{\pi_k}_\beta$ is equal to the fixed point of the Bellman Operator $\mathcal{T}^{\pi_k}_{\beta}$, it holds that
  \begin{equation}
    Q^{\pi_k}_\beta(s,a) = \mathcal{T}^{\pi_k}_{\beta}Q^{\pi_k}_\beta(s,a) = \begin{cases}
      \mathcal{T}^{\pi_k}Q^{\pi_k}_\beta(s,a), & \text{if}\quad\beta(a|s) > 0, \\
      Q_\text{min},                            & \text{otherwise}.
    \end{cases}
  \end{equation}
  As a result, $\forall s\in\mathcal S, a\in\mathcal A, \beta(a|s) > 0$, we have
  \begin{equation}
    \begin{aligned}
      Q^{\pi_k}_\beta(s,a) & = \mathcal{T}^{\pi_k}Q^{\pi_k}_\beta(s,a)
      \\
                           & = r(s,a) + \gamma \E_{a'\sim\pi_k(\cdot|s')}\left[Q^{\pi_k}_\beta(s',a')\right]\quad\text{where }s'=T(s,a)
      \\
                           & = r(s,a) + \gamma \E_{a'\sim\pi_{k+1}(\cdot|s')}\left[Q^{\pi_k}_\beta(s',a')\right]\quad\text{from \Eqref{eq:equal_q_value}}
      \\
                           & =r(s,a) + \gamma \sum_{a\in\mathcal{A}}\mathbb{I}\left[a=\argmax_{a'\in\text{supp}(\beta(\cdot|s'))}Q^{\pi_k}_\beta(s',a')\right]Q^{\pi_k}_\beta(s',a)\quad\text{from \Eqref{eq:pi_k+1}}
      \\
                           & = r(s,a) + \gamma \max_{a'\in\text{supp}(\beta(\cdot|s'))}Q^{\pi_k}_\beta(s',a')
      \\
                           & = \mathcal{T}_\beta^*Q^{\pi_k}_\beta(s,a)
    \end{aligned}
  \end{equation}
  Thus, $Q^{\pi_k}_\beta$ is the fixed point of the Bellman optimality operator $\mathcal{T}^*_{\beta}$, i.e., $Q^{\pi_k}_\beta = Q^*_{\beta}$. From \Eqref{eq:pi_k+1}, we have,
  \begin{equation}
    \pi_{k+1}(a|s) = \mathbb{I}\left[
      a = \argmax_{a'\sim \text{supp} \left(\beta(\cdot|s)\right)} Q^{\pi_k}_{\beta}({\color{revision}s, a'})
      \right] = \mathbb{I}\left[
      a = \argmax_{a'\sim \text{supp} \left(\beta(\cdot|s)\right)} Q^*_{\beta}({\color{revision}s, a'})
      \right] = \pi^*_{\beta}(a|s)
  \end{equation}
\end{proof}

\section{Supplementary Details of Implementation Section} \label{app:implementation}

\subsection{Proofs}

\paragraph{Behavior-Supported Value Function} The core of our algorithm involves training a critic model to perform behavior-supported policy evaluation.
In our implementation, we predict behavior-supported V-values instead of Q-values to achieve greater stability while maintaining equivalent policy evaluation.
The deterministic state transition equation of a token-level MDP in LLM, $T(x \cup a_{0:t-1}, a_t) = x \cup a_{0:t}$, ensures this equivalence.
Specifically, the behavior-supported Bellman V-operator is defined as:
\begin{equation}
  \mathcal{T}_{\beta,V}^\pi V(x\cup a_{0:t-1}) \doteq \begin{cases}
    \mathcal{T}_{V}^\pi V(x\cup a_{0:t-1}),                                 & \text{if}\quad\beta(a_{t-1}|x\cup a_{0:t-2}) > 0, \\
    \frac{1}{\gamma}\left[Q_\text{min} - r(x\cup a_{0:t-2},a_{t-1})\right], & \text{otherwise},
  \end{cases}
\end{equation}
where $\mathcal{T}^\pi_{V} V(x\cup a_{0:t-1}) \doteq \E_{a_t\sim\pi(\cdot|x\cup a_{0:t-1})}\left[r(x\cup a_{0:t-1},a_{t}) + \gamma V(x\cup a_{0:t})\right]$ is the standard Bellman operator for the V-value function. 
This operator has the following properties:

\vContraction*

\begin{proof}
  $\forall f_1, f_2: \mathcal{S}\rightarrow \mathbb{R}$, $\forall\pi\in\Pi$, and $\forall s\in\mathcal{S}$, denoting $s = x\cup a_{0:t-1}$, if $\beta(a_{t-1}|x\cup a_{0:t-2})$=0, it follows that
  \begin{equation}
    \left|\mathcal{T}^\pi_{\beta,V} f_1(s) - \mathcal{T}^\pi_{\beta,V} f_2(s)\right| = \left| V_\text{min}(s) - V_\text{min}(s) \right| = 0 \leq \gamma \left\| f_1 - f_2 \right\|_\infty;
  \end{equation}
  if $\beta(a_{t-1}|x\cup a_{0:t-2}) > 0$, we have
  \begin{equation}
    \begin{aligned}
      \left|\mathcal{T}^\pi_{\beta,V} f_1(s) - \mathcal{T}^\pi_{\beta,V} f_2(s)\right|
       & = \left|\E_{a\sim\pi(\cdot|s)}\left[r(s,a) + \gamma f_1(s')\right] - \E_{a\sim\pi(\cdot|s)}\left[r(s,a) + \gamma f_2(s')\right]\right|
      \\
       & = \left|\gamma\E_{a\sim\pi(\cdot|s)}\left[f_1(s') - f_2(s')\right]\right|
      \\
       & \leq \gamma \E_{a\sim\pi(\cdot|s)}\left|f_1(s') - f_2(s')\right|
      \\
       & \leq\gamma \max_{s\in\mathcal{S}}\left|f_1(s) - f_2(s)\right|
      \\
       & =\gamma \left\| f_1 - f_2 \right\|_\infty
    \end{aligned}
  \end{equation}
  where $s' = T(s,a)$. Thus, $\|\mathcal{T}^\pi_{\beta,V} f_1 - \mathcal{T}^\pi_{\beta,V} f_2\|_\infty \leq \gamma \|f_1 - f_2\|_\infty$, that is, $\mathcal{T}^\pi_{\beta,V}$ is a $\gamma$-contraction operator in the $\mathcal{L}_\infty$ norm within the $\mathcal{S}$ space.
\end{proof}

We refer to the fixed point $V^\pi_\beta$ as the behavior-supported V-value function. Now, we will prove that $Q^\pi_\beta$ and $V^\pi_\beta$ have the equivalent policy evaluation.

\equivalentPolicyEval*

\begin{proof} By Theorem \ref{the:v_contraction}, $\mathcal{T}^\pi_{\beta,V}$ is a $\gamma$-contraction operator in the $\mathcal{L}_\infty$ norm within the $\mathcal{S}$ space. Assume the $V^\pi_\beta$ is the fixed point, then we have $\forall s = x\cup a_{0:t-1}\in\mathcal{S}$,
  \begin{equation}
    V^\pi_\beta(x\cup a_{0:t-1}) = \begin{cases}
      \mathcal{T}_{V}^\pi V(x\cup a_{0:t-1}),                                 & \text{if}\quad\beta(a_{t-1}|x\cup a_{0:t-2}) > 0, \\
      \frac{1}{\gamma}\left[Q_\text{min} - r(x\cup a_{0:t-2},a_{t-1})\right], & \text{otherwise},
    \end{cases}
  \end{equation}

  $\forall s = x\cup a_{0:t-1}\in\mathcal{S}, a_{t}\in\mathcal{A}$, if $\beta(a_t|x\cup a_{0:t-1}) = 0$, it holds that
  \begin{equation}
    \begin{aligned}
      Q^\pi_\beta(x\cup a_{0:t-1},a_t)
       & = r(x\cup a_{0:t-1},a_t) + \gamma V^\pi_\beta(x\cup a_{0:t})
      \\
       & = r(x\cup a_{0:t-1},a_t) + \gamma \cdot \frac{1}{\gamma}\left[Q_\text{min} - r(x\cup a_{0:t-1},a_t)\right]
      \\
       & = Q_\text{min}
    \end{aligned}
  \end{equation}
  if $\beta(a_t|x\cup a_{0:t-1}) > 0$, it holds that
  \begin{equation}
    \begin{aligned}
      Q^\pi_\beta(x\cup a_{0:t-1},a_t)
       & = r(x\cup a_{0:t-1},a_t) + \gamma V^\pi_\beta(x\cup a_{0:t})
      \\
       & = r(x\cup a_{0:t-1},a_t) + \gamma \mathcal{T}_{V}^\pi V(x\cup a_{0:t})
      \\
       & = r(x\cup a_{0:t-1},a_t) + \gamma \E_{a_{t+1} \sim \pi(\cdot|x\cup a_{0:t})}\left[
      r(x\cup a_{0:t},a_{t+1}) + \gamma V^\pi_\beta(x\cup a_{0:t+1})
      \right]
      \\
       & = r(x\cup a_{0:t-1},a_t) + \gamma \E_{a_{t+1} \sim \pi(\cdot|x\cup a_{0:t})}\left[
        Q^\pi_\beta(x\cup a_{0:t},a_{t+1})
        \right]
      \\
       & = \mathcal{T}^\pi Q^\pi_\beta(x\cup a_{0:t-1},a_t)
    \end{aligned}
  \end{equation}
  Therefore, we have
  \begin{equation}
    \begin{aligned}
      Q^\pi_\beta(x\cup a_{0:t-1},a_t)
       & =\mathcal{T}^\pi_\beta Q^\pi_\beta(x\cup a_{0:t-1},a_t)
      \\
       & = \begin{cases}
             \mathcal{T}^\pi Q^\pi_\beta(x\cup a_{0:t-1},a_t), & \text{if}\quad\beta(a_t|x\cup a_{0:t-1}) > 0, \\
             Q_\text{min},                                     & \text{otherwise}.
           \end{cases}
    \end{aligned}
  \end{equation}
  which means $Q^\pi_\beta$ is the fixed point of the Bellman operator $\mathcal T^\pi_\beta$ in \Eqref{eq:supported_bellman}.

  On the other hand, we will prove that if $Q^\pi_\beta$ is the fixed point of the Bellman operator $\mathcal T^\pi_\beta$ in \Eqref{eq:supported_bellman}, then its corresponding state value function $V^\pi_\beta$ is the fixed point of the Bellman operator $\mathcal T^\pi_{\beta,V}$ in \Eqref{eq:supported_v_bellman}.

  Assume $Q^\pi_\beta$ is the fixed point of the Bellman operator $\mathcal T^\pi_\beta$. $\forall s = x\cup a_{0:t-1}\in\mathcal{S}$, $\forall a_t\in\mathcal{A}$, it holds that
  \begin{equation}
    Q^\pi_\beta(x\cup a_{0:t-1},a_t) = \begin{cases}
      \mathcal{T}^\pi Q^\pi_\beta(x\cup a_{0:t-1},a_t), & \text{if}\quad\beta(a_t|x\cup a_{0:t-1}) > 0, \\
      Q_\text{min},                                     & \text{otherwise}.
    \end{cases}
  \end{equation}

  Since $Q^\pi_\beta(x\cup a_{0:t-2},a_{t-1}) = r(x\cup a_{0:t-2},a_{t-1}) + \gamma V^\pi_\beta(x\cup a_{0:t-1})$,
  $\forall s = x\cup a_{0:t-1}\in\mathcal{S}$, if $\beta(a_{t-1}|x\cup a_{0:t-2})=0$, it follows that
  \begin{equation}
    \begin{aligned}
      V^\pi_\beta(x\cup a_{0:t-1})
       & = \frac{1}{\gamma}\left[Q^{\pi}_\beta(x\cup a_{0:t-2},a_{t-1}) - r(x\cup a_{0:t-2},a_{t-1})\right]
      \\
       & = \frac{1}{\gamma}\left[Q_\text{min} - r(x\cup a_{0:t-2},a_{t-1})\right]
    \end{aligned}
  \end{equation}
  if $\beta(a_{t-1}|x\cup a_{0:t-2}) > 0$, it follows that
  \begin{equation}
    \begin{aligned}
      V^\pi_\beta(x\cup a_{0:t-1})
       & = \E_{a_t\sim\pi(\cdot|x\cup a_{0:t-1})}\left[Q^{\pi}_\beta(x\cup a_{0:t-1},a_t)\right]
      \\
       & = \E_{a_t\sim\pi(\cdot|x\cup a_{0:t-1})}\left[r(x\cup a_{0:t-1},a_t) + \gamma V^\pi_\beta(x\cup a_{0:t})\right]
    \end{aligned}
  \end{equation}
  Therefore, $\forall s\in \mathcal{S}$, we have $V^\pi_\beta(s) = \mathcal{T}^\pi_{\beta,V} V^\pi_\beta(s)$ which means $V^\pi_\beta$ is the fixed point of the Bellman operator $\mathcal T^\pi_{\beta,V}$ in \Eqref{eq:supported_v_bellman}.
\end{proof}
\newpage
\subsection{Pseudo-code}

We provide the pseudo-code of the implementation of our BSPO algorithm as follows:

\begin{algorithm}
    \caption{Behavior-Supported Policy Optimization}\label{alg:BSPO}
    \begin{algorithmic}[1]
        \STATE \textbf{Require:} Reward function $r(x,y)$, behavior function $\beta(a|s)$, initial policy $\pi_0$, initial value function $V_0$
            \FOR{step $k=1, \dots,K$}
                \STATE Generate responses
                    $$\hat{\boldsymbol{y}}_b = (a_{b,0}, \dots, a_{b,T-1}) \sim p_{\pi_{k-1}}(\boldsymbol y_b|\boldsymbol x_b) = \prod_{t=0}^{T-1} \pi_{k-1}(a_{b,t} | s_{b,t})$$
                where $s_{b,0} = \boldsymbol x_b$, and $b=1,\cdots,B$ refers to the index within the batch.
                \STATE Compute reward with KL penalty$^\dagger$:
                    \begin{align*}
                        \boldsymbol r^{\text{RM}}_{b,T-1} & = r(\boldsymbol x_b, \hat{\boldsymbol{y}}_b), \\
                        r^{\text{KL}}_{b,t} & = - \log \frac{\pi_k(a_{b,t}|\boldsymbol x_b\cup a_{b,0:t-1})}{\pi_0(a_{b,t}|\boldsymbol x_b\cup a_{b,0:t-1})}, \quad (0\le t \le T-1), \\
                        \hat{r}_{b,t} & = r^{\text{RM}}_{b,t} + \nu \cdot r^{\text{KL}}_{b,t}, \quad (0\le t \le T-1),
                    \end{align*}
                    where $\nu$ represents the KL penalty coefficient.
                \STATE Compute advantage estimates:
                    \begin{align*}
                        \hat A_{b,t} = \delta_{b,t} + \gamma \lambda \delta_{b,t+1} + \cdots + (\gamma\lambda)^{T-t-1}\delta_{b,T-1},
                    \end{align*}
                    where $\delta_{b,t} \doteq \hat{r}_{b,t} + \gamma V_{k-1}(s_{b,t+1}) - V_{k-1}(s_{b,t})$.
                \STATE Compute actor loss:
                    \begin{align*}
                        \mathcal L_{\textsc{PPO}} = -\frac{1}{BT} \sum_{b=1}^B \sum_{t=0}^{T-1} \min\{\rho_{b,t}(k) \hat A_{b,t}, \mathrm{clip}(\rho_{b,t}(k), 1-\epsilon, 1+\epsilon)\hat A_{b,t} \},
                    \end{align*}
                    where $\rho_{b,t}(k) = \frac{\pi_k(a_{b,t}|s_{b,t})}{\pi_{k-1}(a_{b,t}|s_{b,t})}$.
                \STATE Compute behavior-supported value function and critic loss$^\ddagger$:
                    {\color{red}
                    \begin{align*}
                        \mathcal{T}_{\beta,V}^{\pi_{\bm{\theta}_k}} V_{k}(x_b\cup a_{b,0:t-1}) \doteq \begin{cases}
                        \hat r_{b,t} + \gamma V_{k}(x_b\cup a_{b,0:t}) & \text{if}\quad\beta(a_{b,{t-1}}\mid x_b\cup a_{b,0:t-2})>\epsilon_\beta, \\
                        V_{min}, & \text{otherwise},
                        \end{cases}
                    \end{align*}}
                    \begin{equation*}
                        \mathcal{L}_V = \frac{1}{BT} \sum_{b=1}^B \sum_{t=0}^{T-1} \left[\left(V(s_{b,t}) - \mathcal{T}_{\beta,V}^{\pi_{k-1}} {V_k}(s_{b,t})\right)^2\right].
                    \end{equation*}
                where $\epsilon_{\beta}$ and $V_{min}$ are constant minimal values representing the unsupported action's threshold and the minimum of the value function.
                \STATE Update actor model and critic model with $\mathcal L_\text{PPO}$ and $\mathcal L_V$
            \ENDFOR
        \STATE \textbf{Return} $\pi_K$
    \end{algorithmic}
\end{algorithm}

$^\dagger$: Since our value regularization method is compatible with the reward regularization method based on KL divergence control, we preserve this component of the standard RLHF implementation in the pseudo-code. Notably, we ensure a fair comparison against the KL-only method in main experiments and provide a detailed analysis of the differences between the two methods in Section \ref{sec:kl_distance}.

$^\ddagger$: In the implementation, a parameterized method predicts the behavior policy. To prevent modeling errors, an infinitesimal value $\epsilon_\beta$ is used as a threshold, instead of zero, to determine whether an action is supported.

\section{Supplementary Details of Experiments}
\subsection{Baselines}\label{sec:baselines}

In this section, we elaborate on baseline algorithms and provide details of our implementation.

\subsubsection{Standard PPO}

We adopt the training procedure proposed by \cite{ouyang2022training}, with a focus on the RL objective. In this setup, the output from the score head of the ScoreLM model serves as the reward function for RL training of baselines to ensure a fair comparison. Given a prompt $x \sim \mathcal{D}_{\text{prompt}}$, the current actor model $\pi_{\theta} (y | x)$ generates a corresponding response $y = a_{0:T-1}$ with length $T$. Then the reward for tokens $a_{0:T-1}$ is defined as:

\begin{align}
    r^{\text{RM}}_t & = \begin{cases}
        0, & 0 \le t < T-1, \\
        R_\phi(x,y), & t = T-1,
    \end{cases}
\end{align}

In the RLHF fine-tuning phase, we use the Proximal Policy Optimization (PPO) algorithm \citep{schulman2017proximalpolicyoptimizationalgorithms} to train the LLM. The surrogate PPO clip loss for the RL training objective is formulated as:

\begin{equation}
    \mathcal{L}^{\text{RL}} (\theta; \mathcal{D}_{\text{prompt}})= - \E_{x \sim \mathcal{D}_{\text{prompt}}, y \sim \pi_{\theta} (y | x)} \left[ \E_t\left[\min\left(\rho_t(\theta)\hat{A}^{\hat{r}_t}, \operatorname{clip}\left(\rho_t(\theta),1-\epsilon,1+\epsilon\right)\hat{A}^{\hat{r}_t}\right)\right] \right]
\end{equation}

where $\rho_t(\theta)=\frac{\pi_\theta\left(a_t|x\cup a_{0:t-1}\right)}{\pi_{\theta_{\text{old}}}\left(a_t|x\cup a_{0:t-1}\right)}$ is the importance sampling weight and $\theta_{\text{old}}$ is model parameters from the previous gradient update, $\epsilon \in (0, 1)$ is the PPO clip ratio. $\hat{A}^{\hat{r}}_t$ is the advantage of the reward estimated by the GAE method \citep{schulman2018highdimensionalcontinuouscontrolusing}.

\subsubsection{KL Penalty}

The primary difference from standard PPO is the addition of a per-token KL penalty to the reward function used in the RL training. With the prompt $x\sim\mathcal{D}_{prompt}$ and the corresponding response $y=a_{1:T}$, the reward function can be expressed as:

\begin{align}
    r^{\text{KL}}_t & = - \log \frac{\pi_\theta(a_t|x \cup a_{0:t-1})}{\pi_{\text{ref}}(a_t|x \cup a_{0:t-1})}, \quad (0\le t \le T-1), \\
    \hat{r}_t & = r^{\text{RM}}_t + \beta \cdot r^{\text{KL}}_t, \quad (0\le t \le T-1),
\end{align}

where $\pi_{\text{ref}}(\cdot \mid x)$ denotes the reference model, and $\beta \ge 0$ represents the KL penalty coefficient. For each token, a dense reward is applied, penalized by the KL divergence between the current actor model and the reference model. The score head of the ScoreLM model provides a sparse reward only on the final token. The reference model is a frozen LLM initialized with the actor model's parameters at the start of the RLHF phase.

\subsubsection{Ensemble}

We propose to learn an ensemble of reward models $\{R_1, ..., R_k \}$ in the reward model training stage. During policy optimization, we combine the reward estimates from different reward models within the ensemble according to the following two methods which have been proved to be more effective.

\paragraph{Worst-Case Optimization} Worst-case optimization (WCO) creates a conservative estimate by choosing the lowest reward from the ensemble at every step. Choosing the lowest reward helps ensure that as long as at least one ensemble member does not overestimate the true reward, policy optimization will not result in over-optimization.

\begin{equation}
    R_{WCO}(x,y) := \min_i R_i(x,y)
\end{equation}

A major advantage of WCO is that it does not have any hyperparameters that might require tuning. However, it can sometimes result in a performance penalty due to its highly conservative nature.

\paragraph{Uncertainty-Weighted Optimization} In uncertainty-weighted optimization (UWO), the reward for a sample is calculated by combining the average reward across all models in an ensemble with the intra-ensemble variance, weighted by a coefficient $\lambda$. Intuitively, UWO works by penalizing the policy for generating responses for which there is high disagreement among reward models within the ensemble. This helps prevent the exploitation of a single faulty reward model which might be erroneously assigning high rewards to incorrect or low-quality responses. Mathematically, this objective is given by:

\begin{equation}
    R_{UWO}(x,y) =  \underbrace{\frac{1}{k}\sum_i R_i(x,y)}_{\text{mean}} - \lambda\underbrace{\frac{1}{k}\sum_i \left(R_i(x,y)-\frac{1}{k}\sum_i R_i(x,y)\right)^2}_{variance}
\end{equation}

where $\lambda$ is a hyperparameter which controls the weight of the uncertainty component.

\subsubsection{Constrained PPO}

The basic idea of Constrained PPO \citep{moskovitz2023confronting} is to model the over-optimization problem as a constraint in the RLHF process and then solve it using the Safe RL \citep{JMLR:v25:23-0681,NEURIPS2023_3c557a3d,NEURIPS2022_3ba7560b,Dai_Ji_Yang_Zheng_Pan_2023,meng2021off,dai2024safereinforcementlearningusing,pmlr-v70-achiam17a} domain approach.
Unlike the original paper which uses several rule-based rewards, we employ an LLM-based reward model. To identify proxy point, we train PPO agents \citep{schulman2017proximalpolicyoptimizationalgorithms} to maximize the reward model (without KL regularization) and plot the resulting evaluation scores against the gold model. We observe that the evaluation score initially increase before falling, so we identify the point where further optimization will lead to a decrease in the gold score. After performing the above steps multiple times, we select the most conservative proxy point to ensure that roo will not occur.

Once one has identified proxy point for the reward model, the next question is how to train agents to maximize the reward until they hit this critical value. In constrained reinforcement learning, an agent seeks to maximize its value while adhering to constraints on its behavior. Mathematically, this problem is formalized as a constrained MDP \citep{altman1999constrained}. For clarity, we will hereafter refer to r0 as the “task reward” rather than just the reward, and the CMDP optimization problem is given by:

\begin{equation}
    \max_{\pi} v_0^{\pi} \text{ s.t. } v_i^{\Pi} \geq \theta_i,  i=1,\ldots, N
\end{equation}

As mentioned in the paper, $r_0$ is selected as kl divergence, and we have only one reward model, that is, N=1. Given our possible objectives, we can now consider how to optimize them. One popular approach to solving constrained problems is to use Lagrangian relaxation:

\begin{equation}
    \max_\pi \min_{\mu \geq 0} v_0^{\pi} + \sum_{i=1}^N\mu_i (v_i^\pi-\theta_i)
\end{equation}

where the weights on the value of each RMs $\mu = [\mu_1, \ldots , \mu_N ]^T \in R^N_{\geq 0}$ are the Lagrange multipliers associated with each constraint.

We implemented the algorithm referred to as $\xi-PPO$ in the paper, which is also the best-performing among the several algorithms proposed. It is designed to stay close to $\pi_0$ and ensure reward models hit the target, with an objective of the following form:

\begin{equation}
    \max_\pi v_{KL}^\pi \text{ s.t. } v_j=\theta_j \forall j\neq i
\end{equation}

With this objective, we then design mixed advantages which are a convex combination of the task and constraint advantages:

\begin{equation}
    A_\mu^\pi(s,a)=(N-\sum_{i=1}^N \mu_i)A_0^\pi(s,a)+\sum_{i=1}^N\mu_iA_i^\pi(s,a)
\end{equation}

This equation has the intuitive interpretation of placing more weight on optimizing constraint reward $r_i>0$ when $\mu_i >0$ is high (indicating a constraint violation), and more weight on task reward $r_0$ when $\mu_{1:N}$ are low (indicating that constraints are satisfied).

The original paper employs a tanh function to constrain the Lagrange multipliers between -1 and 1. However, in our experiment, the total number of training steps is relatively low due to the large batch size. Using the tanh function slows down the optimization of the Lagrange multipliers, causing them to remain elevated for too long when constraints are satisfied, and vice versa. Therefore, we replace the tanh function with a simple clipping of the Lagrange multipliers between -1 and 1.

\subsection{Hyper-Parameters}

In this section, we provide all the hyper-parameters used in our experiments.

\begin{table}[H]
    \centering
    \caption{Hyper-parameters of three experimental settings of BSPO.}
    \renewcommand{\arraystretch}{1.25}
    \begin{tabular}{lllll}
    \hlineB{3}
        Hyper-parameters & Proxy-774M & Proxy-1.1B & Proxy-2.7B  \\ \hlineB{2}
        epochs & 3 & 3 & 3  \\
        max\_length & 512 & 512 & 512  \\
        temperature & 1.2 & 1.2 & 1.2  \\
        top\_p & 1 & 1 & 1  \\
        num\_return\_sequences & 1 & 1 & 1  \\
        repetition\_penalty & 1.0 & 1.0 & 1.0  \\
        per\_device\_prompt\_batch\_size & 8 & 8 & 8  \\
        per\_device\_train\_batch\_size & 8 & 8 & 8  \\
        gradient\_accumulation\_steps & 1 & 1 & 1  \\
        actor\_lr & 1.00E-5 & 1.00E-5 & 1.00E-5  \\
        actor\_weight\_decay & 0.01 & 0.01 & 0.01  \\
        actor\_lr\_scheduler\_type & cosine & cosine & cosine  \\
        actor\_lr\_warmup\_ratio & 0.03 & 0.03 & 0.03  \\
        actor\_gradient\_checkpointing & TRUE & TRUE & TRUE  \\
        critic\_lr & 5.00E-06 & 5.00E-06 & 5.00E-06  \\
        critic\_weight\_decay & 0.0 & 0.0 & 0.0  \\
        critic\_lr\_scheduler\_type & constant & constant & constant  \\
        critic\_lr\_warmup\_ratio & 0.03 & 0.03 & 0.03  \\
        critic\_gradient\_checkpointing & TRUE & TRUE & TRUE  \\
        clip\_range\_ratio ($\epsilon$) & 0.2 & 0.2 & 0.2  \\
        unsupported\_value & -15 & -15 & -15 \\
        epsilon\_beta ($\epsilon_\beta$) & 1.00E-4 & 1.00E-4 & 1.00E-4 \\
        bf16 & TRUE & TRUE & TRUE \\
        tf32 & TRUE & TRUE & TRUE \\ \hlineB{3}
    \end{tabular}
    \label{tab:param_rl}
\end{table}

\begin{table}[H]
    \centering
    \caption{Hyper-parameters of Reward Model Training.}
    \renewcommand{\arraystretch}{1.25}
    \begin{tabular}{lllll}
    \hlineB{3}
        Hyper-parameters & Gold-8B & Proxy-774M & Proxy-1.1B & Proxy-2.7B  \\ \hlineB{2}
        epochs & 2 & 2 & 2 & 2  \\
        max\_length & 1024 & 1024 & 1024 & 1024  \\
        per\_device\_train\_batch\_size & 16 & 16 & 16 & 16  \\
        per\_device\_eval\_batch\_size & 16 & 16 & 16 & 16  \\
        gradient\_accumulation\_steps & 1 & 1 & 1 & 1  \\
        gradient\_checkpointing & TRUE & TRUE & TRUE & TRUE  \\
        regularization & 0.001 & 0.001 & 0.001 & 0.001  \\
        lr & 2.00E-05 & 2.00E-05 & 2.00E-05 & 2.00E-05  \\
        lr\_scheduler\_type & cosine & cosine & cosine & cosine  \\
        lr\_warmup\_ratio & 0.03 & 0.03 & 0.03 & 0.03  \\
        weight\_decay & 0.1 & 0.1 & 0.1 & 0.1  \\
        lm\_coef ($\alpha$) & NULL & 0.01 & 0.01 & 0.01 \\
        bf16 & TRUE & TRUE & TRUE & TRUE \\
        tf32 & TRUE & TRUE & TRUE & TRUE \\ \hlineB{3}
    \end{tabular}
    \label{tab:param_rm}
\end{table}

\subsection{Runtime Environment}

All experiments in this paper utilize the following runtime environment. The server’s CPU is an Intel(R) Xeon(R) Platinum 8358P CPU @ 2.60GHz with 128 cores, and the graphics cards are NVIDIA A100-SXM4-80GB ×8, with NVLink support and the graphics driver version being 550.54.15.
\newpage

\section{More Experimental Results}

\subsection{Proxy Model Fails to Evaluate Unsupported Responses} \label{app:unsupported}

Since reward learning is inherently data-driven, it faces significant challenges in evaluating responses that lie outside the distribution of the reward training dataset.
As described in Section \ref{sec:analysis}, we introduce an OOD detection approach for reward prediction of RLHF based on the power of well-pretrained LLMs for next-token prediction.

\begin{figure}[H]
  \centering
  \includegraphics[width=\columnwidth]{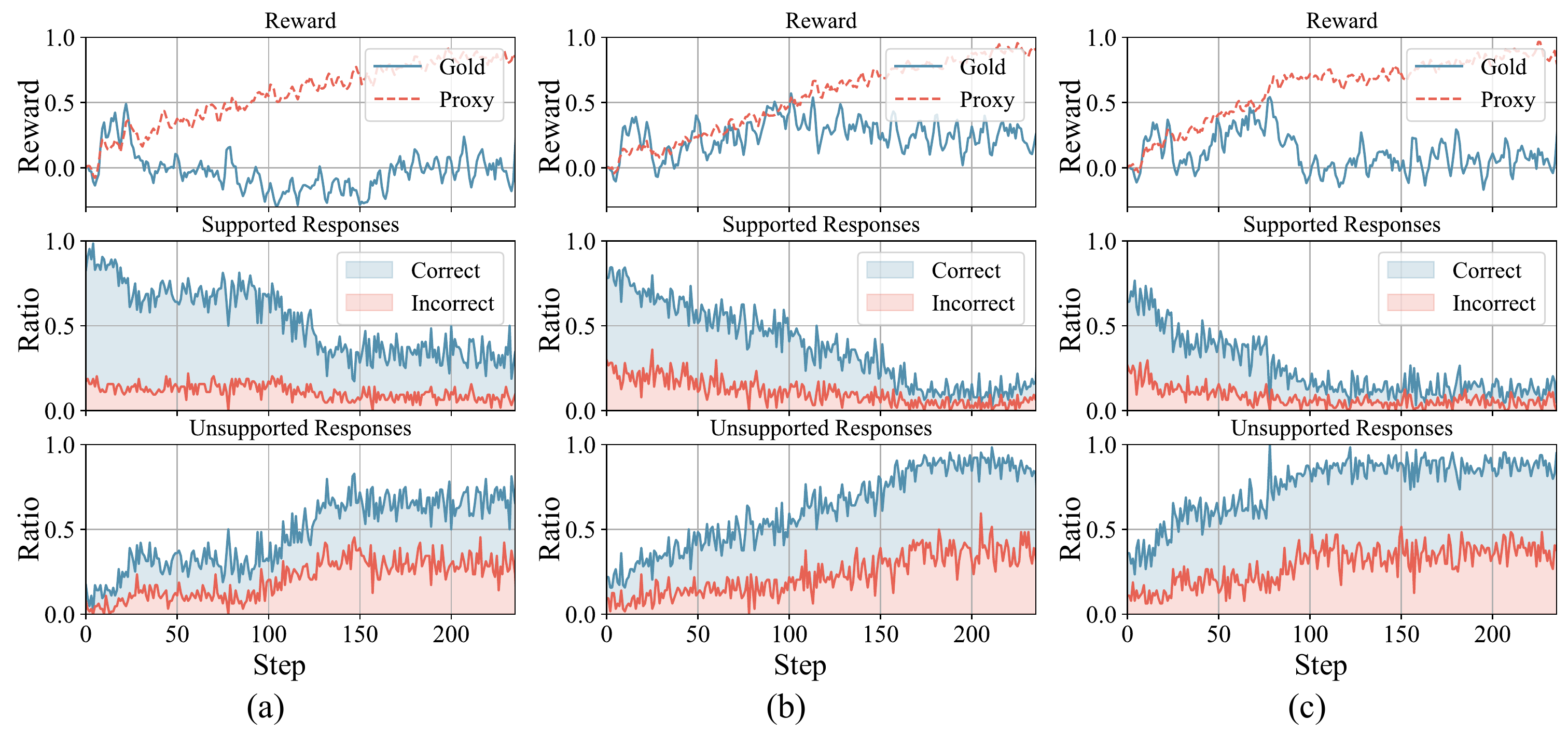}
  \caption{The prediction accuracy of the proxy model on pairs of supported and unsupported responses is evaluated across three repeated standard PPO experiments, each conducted with different random seeds.}
  \label{fig:fail_in_unsupported}
\end{figure}

We collect all responses generated during RLHF under the same experimental settings and categorize them as supported or unsupported based on whether the behavior policy supports the actions.
These responses, along with those from the initial model given the same prompts, form comparison pairs.
We then test whether the proxy model can correctly predict preferences using the gold model as ground truth. The results are provided in Figure \ref{fig:fail_in_unsupported}.

Figure \ref{fig:fail_in_unsupported} shows that as the policy iterates, the proportion of unsupported responses generated by the LLM increases.
For comparison pairs consisting of supported responses, the proxy model predicts preferences with accuracies of 75.91\%, 71.56\%, and 74.81\%, which are comparable to its performance on the test set (Figure \ref{fig:scorelm}(b)).
In contrast, for comparison pairs involving unsupported responses, the model’s accuracy drops markedly to 58.10\%, 62.25\%, and 59.01\%.
These findings suggest that whether the response contains only behavior-supported actions is an effective indicator for measuring whether it is OOD for reward prediction.
Furthermore, the occurrence of reward over-optimization is also directly related to the percentage of unsupported responses and the validity of the evaluation of responses (extrapolation error).

\subsection{More Training Curves of ScoreLM} \label{app:training_scorelm}

\begin{figure}[t]
  \centering
  \includegraphics[width=\columnwidth]{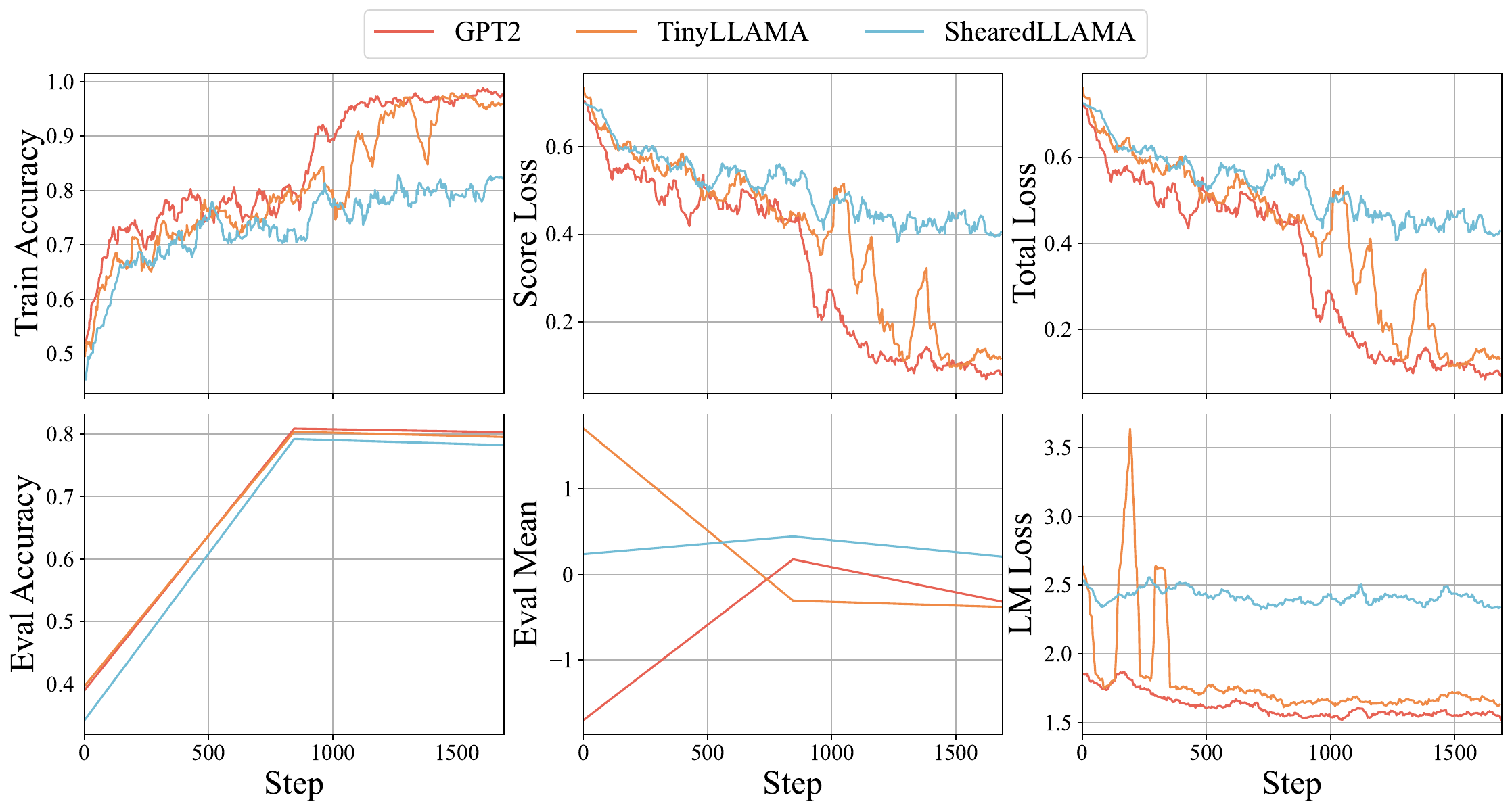}
  \caption{Training curves of ScoreLM models}
  \label{fig:Scorelm training curve}
\end{figure}

We train three ScoreLM models for our algorithm and baselines using the loss function in \Eqref{eq:scorelm} with different model scales, as shown in Figure \ref{fig:Scorelm training curve}.
The accuracies on the evaluation dataset demonstrate a scaling law, where larger pretrained models consistently achieve better performance on the downstream task.

\subsection{More Training Curves of Ensemble Baselines}

The ensemble baselines, ENS-UWO and ENS-WCO, require multiple reward or ScoreLM models.
Therefore, we train four reward models for each pretrained model the same experimental settings, varying only the random seed.
The training curves and evaluation curves are presented in Figure \ref{fig:RM training curve}.

\begin{figure}[t]
  \centering
  \includegraphics[width=\columnwidth]{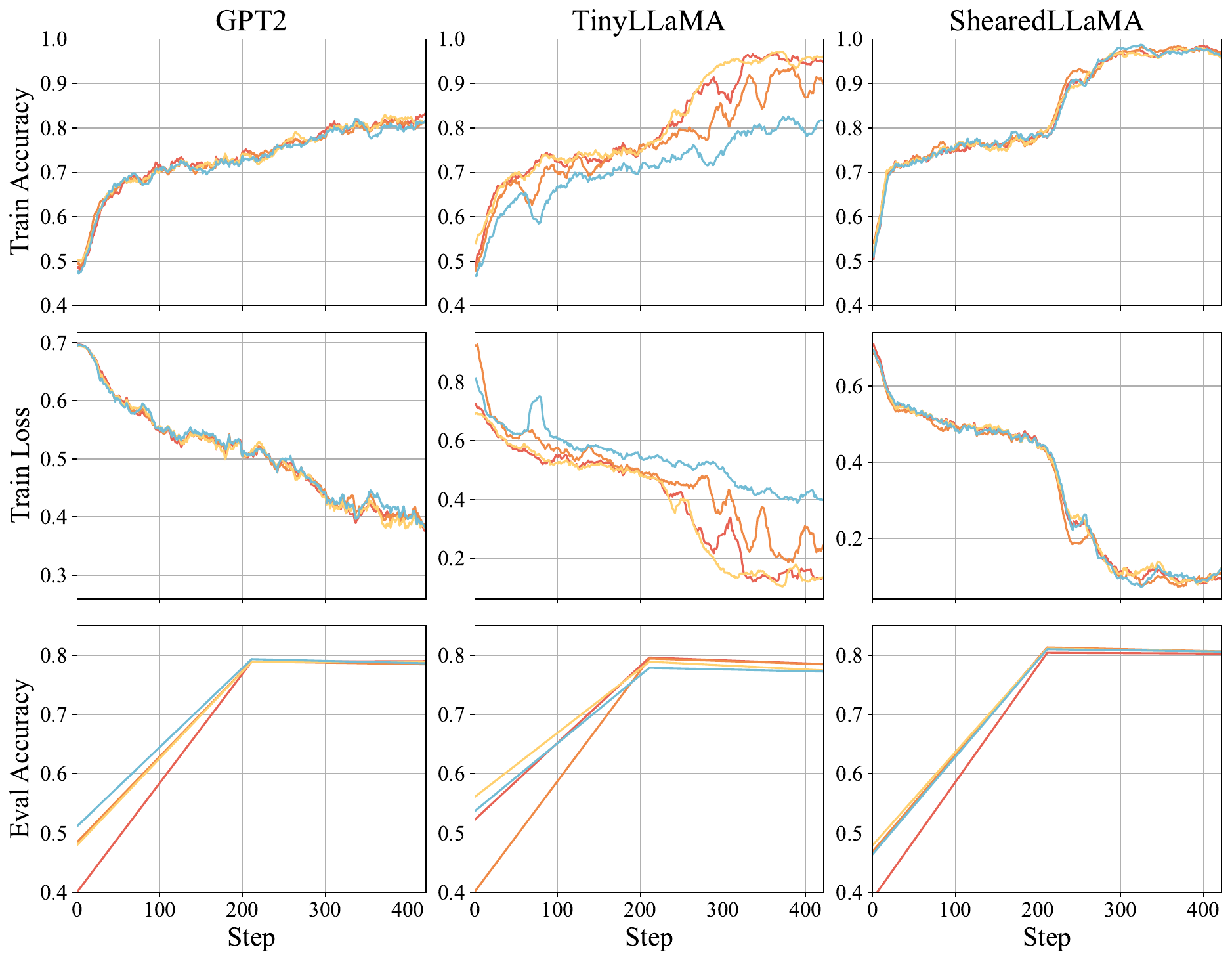}
  \vspace{-0.5em}
  \caption{Training curves of four reward models used in ensemble baselines, with variations only in random seeds.}
  \label{fig:RM training curve}
\end{figure}

\subsection{Response length}

Improvements in reward are often driven by longer responses, as LLMs exploit human raters' preference for detailed content \citep{singhal2024longwaygoinvestigating, shen2023looselipssinkships}.
Several methods aim to control this length bias \citep{dubois2024lengthcontrolledalpacaevalsimpleway, singhal2024longwaygoinvestigating, shen2023looselipssinkships, chen2024odindisentangledrewardmitigates,ji2025aligner,ji2024alignanythingtrainingallmodality}.
Our method provides a more comprehensive solution, effectively mitigating various biases, including response length.

We collect response length statistics for different methods during training, as shown in Figure \ref{fig:length}. Both the PPO and ENS-UWO algorithms approach the model's maximum generation length, while ENS-WCO generates shorter responses due to conservation optimization. KL-penalty PPO produces shorter responses by strictly enforcing KL-divergence constraints. Constrained PPO is the most unstable, with response lengths varying unpredictably. Although there is no control for length, our proposed approach strikes a balance and achieves an appropriately concise and detailed response.

\begin{figure}[t]
  \centering
  \includegraphics[width=\columnwidth]{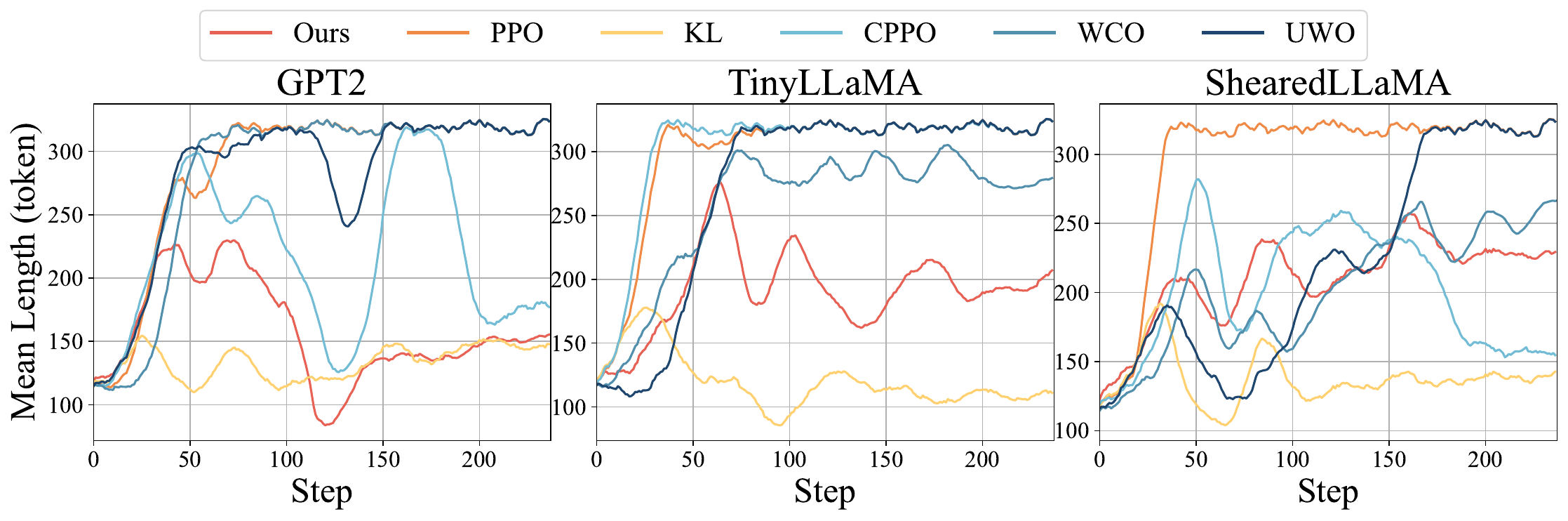}
  \caption{Mean generated length in different algorithms}
  \label{fig:length}
\end{figure}

{\color{revision}
\subsection{Non-Synthetic Setting} \label{app:non-synthetic}

Studying reward over-optimization requires continuous evaluation during the training and effective comparison with the proxy reward. However, human evaluation is not only expensive but also unable to provide timely scalar feedback. As a result, to the best of our knowledge, nearly all prior works \citep{gao2023scaling,coste2023reward,eisenstein2023helping} rely on synthetic settings.

The differences between synthetic settings and real-world scenarios are an important concern.
In this section, we emulate human behavior by the GPT-4o \citep{achiam2023gpt} to conduct experiments in a realistic setting and engage in further discussions.

\subsubsection{Implementation Details}

Assigning a scalar score to responses through human evaluation is inherently challenging. Therefore, an effective alternative involves comparing the responses generated by different checkpoints on an evaluation dataset. 
Specifically, we train a 7B reward model from Alpaca-7B \citep{alpaca} on the UltraFeedback preference dataset \citep{cui2023ultrafeedback}, achieving a test set accuracy of 82.94\%. This reward model was subsequently used to conduct training for BSPO and the baseline methods (PPO \citep{ouyang2022traininglanguagemodelsfollow}, WCO \citep{coste2023reward}).

To quantify evaluation results, we compared the checkpoints on the test set to calculate the win rate between checkpoints and fit an ELO score \citep{askell2021general} as the scalar evaluation metric. The ELO update formula is as follows:
\begin{equation}
R'_A = R_A + K \cdot \left(S_{AB} - \frac{1}{1 + 10^{(R_B - R_A)/400}}\right)
\end{equation}
where $S_{AB}$ represents the win rate of Model A over Model B on the test set, $K$ is the update coefficient, and $R_A$ and $R_B$ are the ELO scores of Models A and B, respectively.

\subsubsection{Experimental Results}

\begin{figure}[t]
  \centering
  \includegraphics[width=\columnwidth]{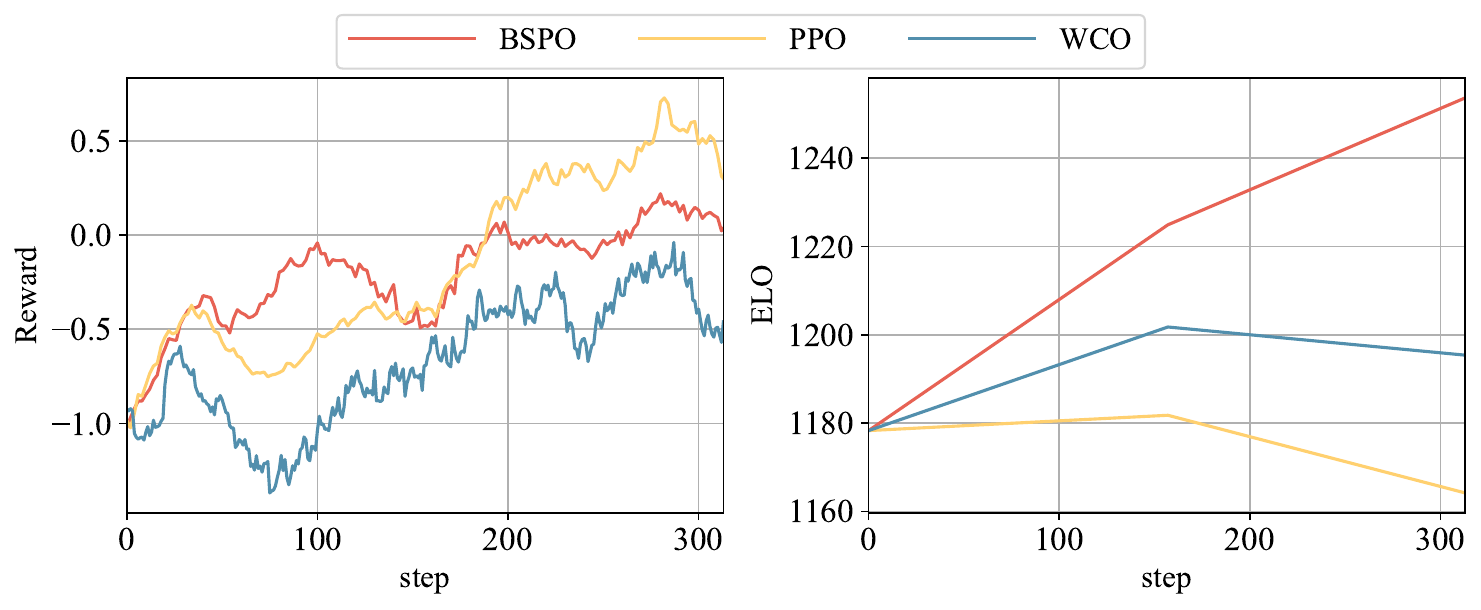}
  \vspace{-1.5em}
  \caption{The training curves and ELO scores of different algorithms in the non-synthetic setup.}
  \label{fig:elo}
\end{figure}

\begin{table*}[t]
\centering
\caption{The win rate of different checkpoints}
\label{tab:win_rate}
\resizebox{0.98\textwidth}{!}{
\begin{threeparttable}
\begin{tabular}{cccccccc}
    \toprule
     {Win Ratio}  & Initial & PPO:79step & PPO:158step & WCO:79step & WCO:158step & BSPO:79step & BSPO:158step   \\
     \midrule
        Initial & 0.5 & 0.4921  &0.5316 & 0.4683 &0.4821 &0.4246 & 0.3849  \\
        PPO:79step & 0.5079&0.5   & 0.5258 & 0.4722 & 0.4942&0.4227 &0.3952 7 \\ 
        PPO:158step& 0.4684 & 0.4742 & 0.5  & 0.4643 &0.4603 & 0.4048    &0.369 \\
        WCO:79step &0.5317 & 0.5278 & 0.5357  &0.5 & 0.5161 & 0.4724     &0.4365 \\
        WCO:158step & 0.5179&0.5058&0.5397& 0.4839& 0.5 &0.4782 & 0.4325 \\
        BSPO:79step &0.5754&0.5773&0.5952&0.5276& 0.5218& 0.5  & 0.4484 \\
        BSPO:158step &0.6151 & 0.6048 & 0.631& 0.5635 & 0.5675& 0.5516 &0.5  \\
\bottomrule
\end{tabular}
\end{threeparttable}
}
\end{table*}

\begin{table*}[t]
\centering
\caption{The ELO scores of different checkpoints}
\label{tab:elo}
\resizebox{0.98\textwidth}{!}{
\begin{threeparttable}
\begin{tabular}{cccccccc}
    \toprule
     Model & Initial & PPO:79step & PPO:158step & WCO:79step & WCO:158step & BSPO:79step & BSPO:158step \\
         \midrule
         ELO   & 1178.30 & 1181.77  & 1164.17  & 1201.77 &1195.39  & 1224.92 &1253.68   \\
\bottomrule
\end{tabular}
\end{threeparttable}
}
\end{table*}

The training curves and ELO scores can be found in Figure \ref{fig:elo} and Table \ref{tab:elo}. The specific win rates between different checkpoints are presented in Table \ref{tab:win_rate}.

Our experiments reveal interesting insights. We observed that while the proxy reward exhibits consistently increases across the three approaches, their performance under human evaluation reveals differences. Naïve PPO shows performance improvement at 79 steps but suffers a severe decline by 158 steps, highlighting the issue of reward over-optimization. WCO maintains stable human evaluation performance across both steps, indicating its effectiveness in mitigating over-optimization. However, it does not identify the optimal solution as effectively as BSPO. BSPO demonstrates continuous improvement in both proxy reward and human evaluation, indicating its capability to address over-optimization while identifying optimal policies.

\subsection{Data with label noise}

Many studies \citep{gao2023scaling, coste2023reward, eisenstein2023helping} utilize re-annotated data with label noise to train their proxy models. Similarly, we conducted experiments to demonstrate the robustness of our algorithm. By introducing 15\% label noise to the training data of the proxy reward models (TinyLlama-1.1B \citep{zhang2024tinyllamaopensourcesmalllanguage}), we obtained the results shown in Figure \ref{fig:noise}.

\begin{figure}[t]
  \centering
  \includegraphics[width=\columnwidth]{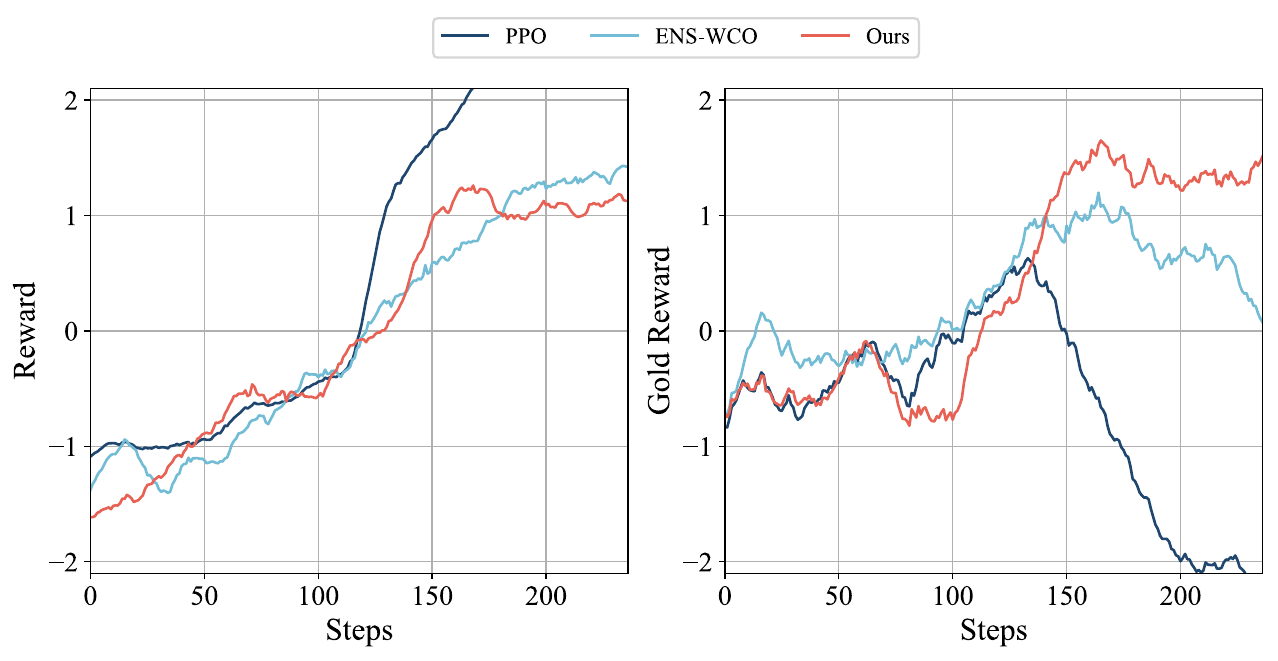}
  \vspace{-1.5em}
  \caption{Training curves of different algorithms with label noise}
  \label{fig:noise}
\end{figure}

We observed the following phenomena. The inclusion of additional noisy data results in a decrease in the performance of all algorithms when evaluated on the gold reward, compared to the Figure \ref{fig:main} of the paper.
Despite this, BSPO consistently mitigates reward over-optimization and achieves better performance than the baseline methods, demonstrating its robustness to noise.

\subsection{Ablation Studies of Hyper-parameters}

\subsubsection{Robustness of $V_\text{min}$}

$V_\text{min}$ is designed to suppress the $V$-values corresponding to OOD actions, necessitating that $V_\text{min}$ be lower than all possible $V$-values.

In general, $V_\text{min}$ can be determined by calculating the minimum $V$-value, such as $V_\text{min} = \frac{1}{1-\gamma} r_\text{min}$, where $r_\text{min}$ denotes the minimum value of the reward function.
Specifically, for RLHF of LLMs, where only the final token receives a nonzero reward, we have $V(s) = \mathbb{E}_{\tau \sim \pi}[\gamma^{\|\tau\|} r(\tau) \mid s_0 = s] \geq \min(0, r_\text{min}) = V_\text{min}$.

On the other hand, empirical evidence suggests that as long as $V_\text{min}$ remains smaller than all possible $V$-values, the performance of BSPO is not sensitive to the $V_\text{min}$.
To evaluate this, we conducted experiments under the condition $r_\text{min} = -10$, testing a range of $V_\text{min}$ values: $V_\text{min} = -10, -15, -20, -25$. 

\begin{figure}[t]
  \centering
  \includegraphics[width=\columnwidth]{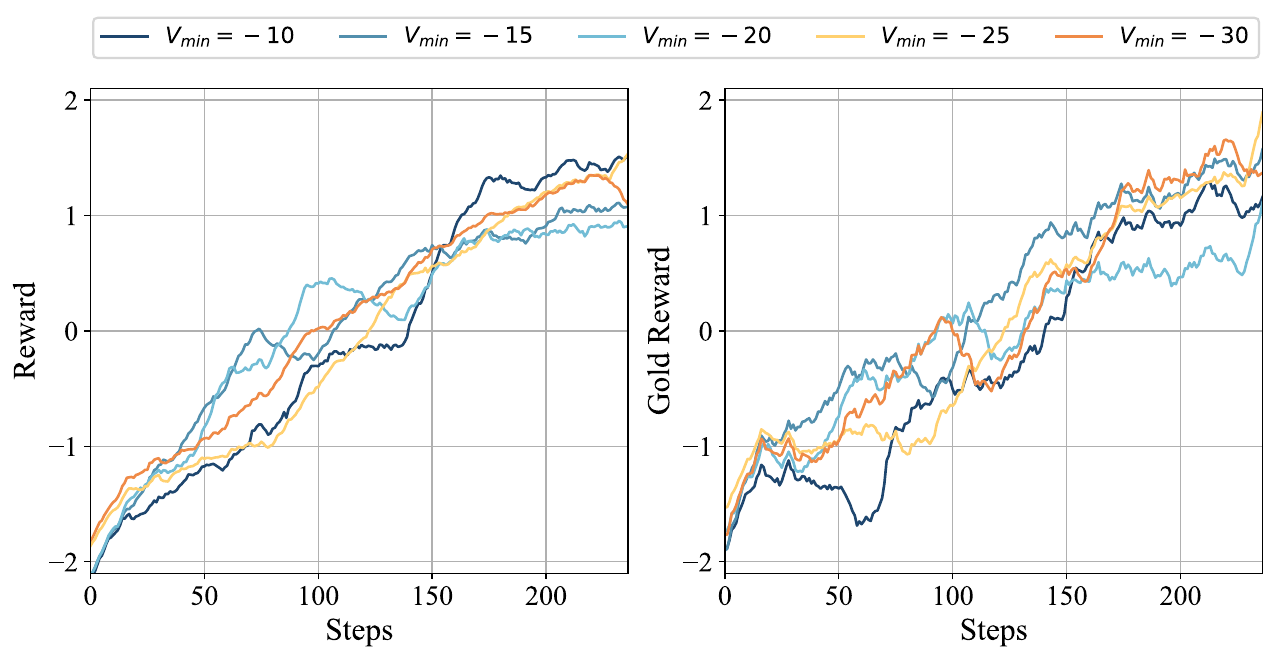}
  \vspace{-1.5em}
  \caption{Training curves of different $V_{min}$ values}
  \label{fig:vmin}
\end{figure}

The resulting experimental outcomes are summarized in Figure \ref{fig:vmin}.
We observe that, across different values of $V_\text{min}$, the training curves of BSPO exhibit similar behavior, consistently improving both the proxy reward and the gold reward while effectively addressing the issue of over-optimization.
These results demonstrate the robustness of the $V_\text{min}$ selection.

\subsubsection{Robustness of $\epsilon_\beta$}

Since we use a parameterized model to predict the behavior policy. Thus, to prevent modeling errors, an infinitesimal value $\epsilon_\beta$ is used as a threshold, instead of 0, to determine whether a response is supported.
To analyze the sensitivity of the BSPO algorithm to $\epsilon_\beta$, we conducted experiments with a range of values: $\epsilon_\beta = 1e-3, 1e-4, 1e-5, 1e-6, 1e-7$. 

\begin{figure}[t]
  \centering
  \includegraphics[width=\columnwidth]{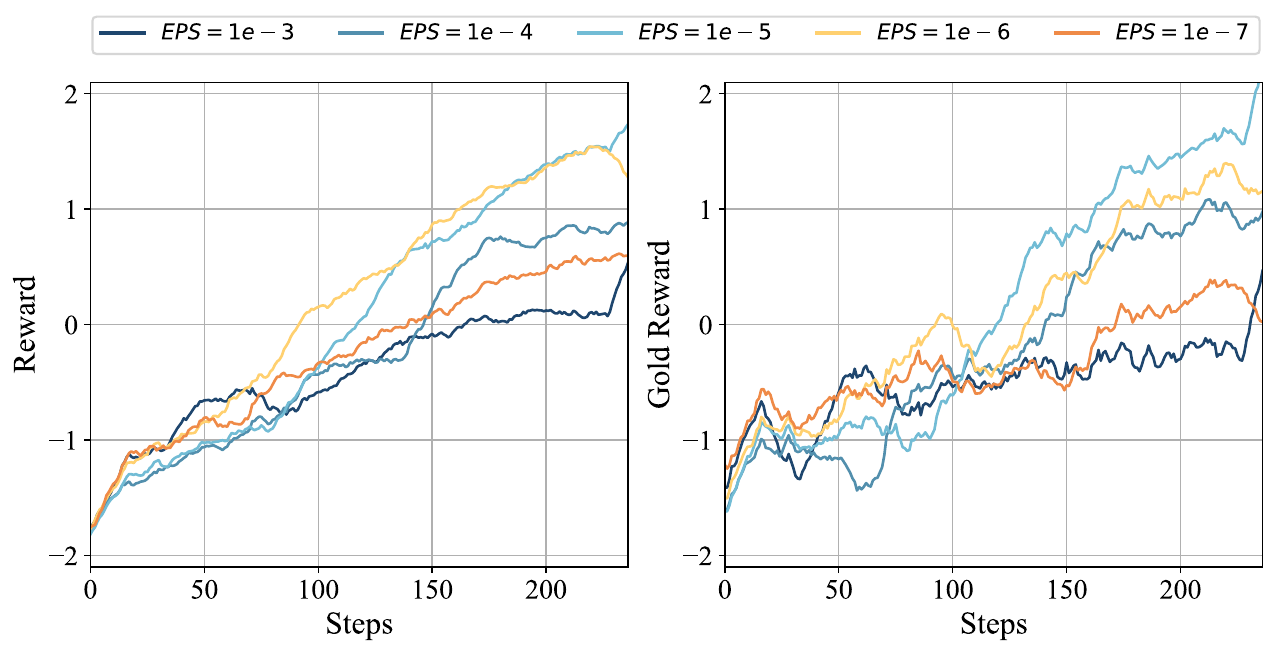}
  \vspace{-1.5em}
  \caption{Training curves of different $\epsilon_\beta$ values}
  \label{fig:epsilon}
\end{figure}

The experimental results are presented in the Figure \ref{fig:epsilon}.
Our observations indicate that for $\epsilon_\beta = 1e-4, 1e-5, 1e-6$, BSPO mitigates reward over-optimization and identifies the optimal policy. However, the performance for $\epsilon_\beta = 1e-3$ and $\epsilon_\beta = 1e-7$ shows slower improvement. This behavior may result from excessively large $\epsilon$ values, which can incorrectly classify some actions as behavior-unsupported, or excessively small $\epsilon$ values, which may fail to account for certain OOD actions.
In conclusion, BSPO is robust to $\epsilon_\beta$ within a reasonable range, but extreme values should be avoided.

\subsection{Comparison with DPO}

Direct Preference Optimization (DPO) \citep{rafailov2024direct} is an efficient algorithm for aligning LLMs with human preferences. DPO learns from preference data with the following objective:
\begin{equation}
    \mathcal{L}_\text{DPO}(\pi_\theta;\pi_\text{ref})=-\mathbb{E}_{x,y_w,y_l}\sim\mathcal D\left[\log\sigma\left(\beta\log\frac{\pi_\theta(y_w|x)}{\pi_\text{ref}(y_w|x)}-\beta\log\frac{\pi_\theta(y_l|x)}{\pi_\text{ref}(y_l|x)}\right)\right]
\end{equation}

DPO-based algorithms have also been reported to have over-optimization issues \citep{liu2024provably,xu2024dposuperiorppollm}. However, the over-optimization problems of DPO-based algorithms differ from the reward over-optimization problem that our work addresses:

\begin{itemize}[leftmargin=*]
    \item Over-optimization studies for DPO focus on mitigating overfitting to the training dataset in DPO-based algorithms, which results in reduced generalization to the test set. DPO-based algorithms do not use a reward model to evaluate responses generated from exploration. Instead, the policy learns preferences solely from the "chosen" and "rejected" labels in the training dataset. Consequently, the reward over-optimization problem solved in our work does not arise in this context.
    \item In contrast, our work addresses a distinct challenge: in RL-based algorithms, exploration can produce OOD responses for the reward model, leading to reward overestimation and, ultimately, reward hacking.
\end{itemize}

To tackle the issue of over-optimization in DPO-based algorithm, \cite{liu2024provably} proposes Regularized Preference Optimization (RPO). This method aims to identify the optimal policy by leveraging an adversarially chosen reward model that minimizes a weighted combination of its expected value and the maximum likelihood estimation (MLE) loss. The corresponding policy objective is
\begin{equation}
    \mathcal{L}_\text{RPO}(\theta) = \eta\beta\cdot\mathbb{E}_{x\sim d_0, a^0\sim\pi_\text{base}(\cdot|x)}\left[-\log(\pi_\theta(a^0|x))\right] + \mathcal{L}_\mathcal{D}\left(\beta\cdot\log\left(\frac{\pi_\theta(\cdot|\cdot)}{\pi_\text{ref}(\cdot|\cdot)}\right)\right)
\end{equation}

The training curves is shown in Figure \ref{fig:dpo}. Although our work does not focus on the same problem, we compare the performance of the two approaches with our algorithm and baselines in Table \ref{tab:DPO}.

\begin{figure}[t]
  \centering
  \includegraphics[width=\columnwidth]{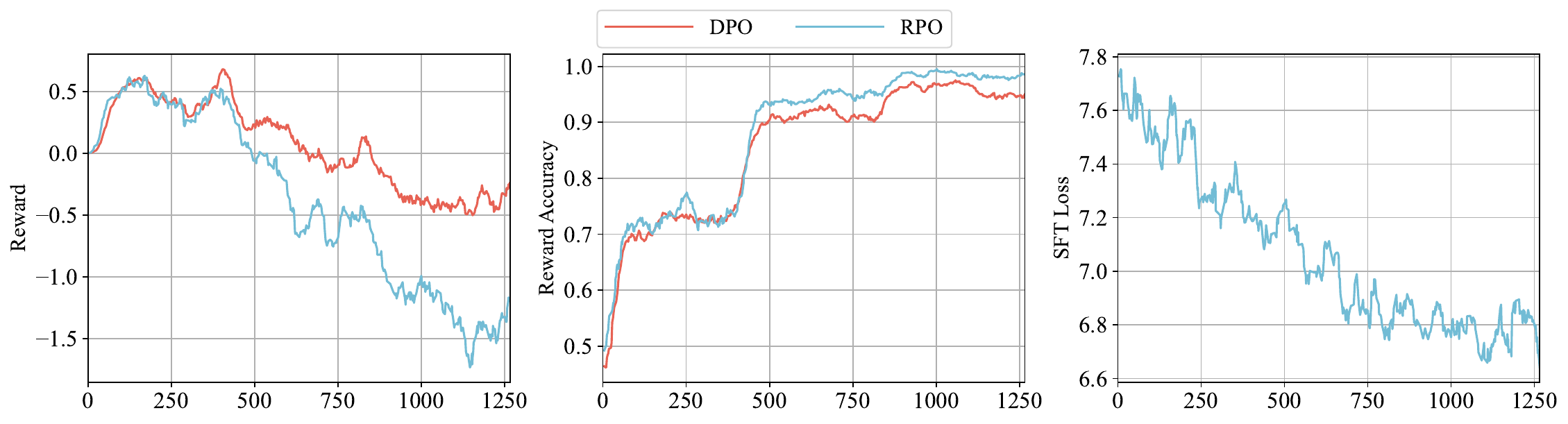}
  \vspace{-1.5em}
  \caption{Training curves of DPO and RPO}
  \label{fig:dpo}
\end{figure}

\begin{table}[t]
    \centering
    \caption{The win rate of DPO and RPO compared to other algorithms}
    \renewcommand{\arraystretch}{1.25}
    \begin{tabular}{lllll}
    \hlineB{3}
        vs. & vs. Alpaca-7B (ref) & vs. PPO & vs. BSPO (ours)  \\ \hlineB{2}
        DPO & 0.6592 & 0.6226 & 0.2627 \\
        RPO & 0.7058 & 0.6664 & 0.297 \\ 
        \hlineB{3}
    \end{tabular}
    \label{tab:DPO}
\end{table}

Both DPO and RPO exhibit improvements compared to the original Alpaca-7B model, with the integration of SFT loss further enhancing their performance.
The performance of PPO training is relatively poor due to severe reward over-optimization issues. In contrast, DPO and RPO do not rely on the reward model during training, thereby avoiding reward over-optimization problems and outperforming PPO.
Nevertheless, because DPO and RPO lack the ability to explore new responses and depend heavily on the quality of the training dataset \citep{xu2024dposuperiorppollm}, their performance falls short of RL-based BSPO.

For this phenomenon, we believe the following is a plausible discussion for the comparison between BSPO and DPO-based methods:
\begin{itemize}[leftmargin=*]
    \item As demonstrated in Theorem \ref{coro:monotonicity_optimality}, although we apply regularization to the value of OOD actions to avoid overestimation, BSPO can still converge to the same optimal solution for the ID region as the standard value. BSPO retains the exploration capability inherent in RL-based methods. Therefore, the comparison between BSPO and DPO-based algorithms primarily reflects the broader distinction between RL-based and DPO-based approaches.
    \item As reported in \citet{xu2024dposuperiorppollm}, compared to RL-based methods, DPO-based methods have several limitations: over-fitting to the training dataset (a different manner of over-optimization), generating a biased policy that favors OOD responses, and sensitivity to the quality of preference data.
\end{itemize}

\subsection{Other dataset}

We also applied our pipeline to the AlpacaFarm dataset \citep{dubois2024alpacafarmsimulationframeworkmethods}. Our gold reward model, based on Llama3-8B \citep{dubey2024llama}, is trained on a combined preference dataset comprising UltraFeedback and a 20k preference dataset annotated by GPT-4 from AlpacaFarm. The gold model with the best performance over three training epochs is selected. For the proxy reward model, we used TinyLlama \citep{zhang2024tinyllamaopensourcesmalllanguage}, training it on the 20k re-annotated AlpacaFarm dataset. This proxy model was then employed in the training of PPO, WCO, and BSPO. The results are presented in Figure \ref{fig:alpacafarm}.

\begin{figure}[H]
  \centering
  \includegraphics[width=\columnwidth]{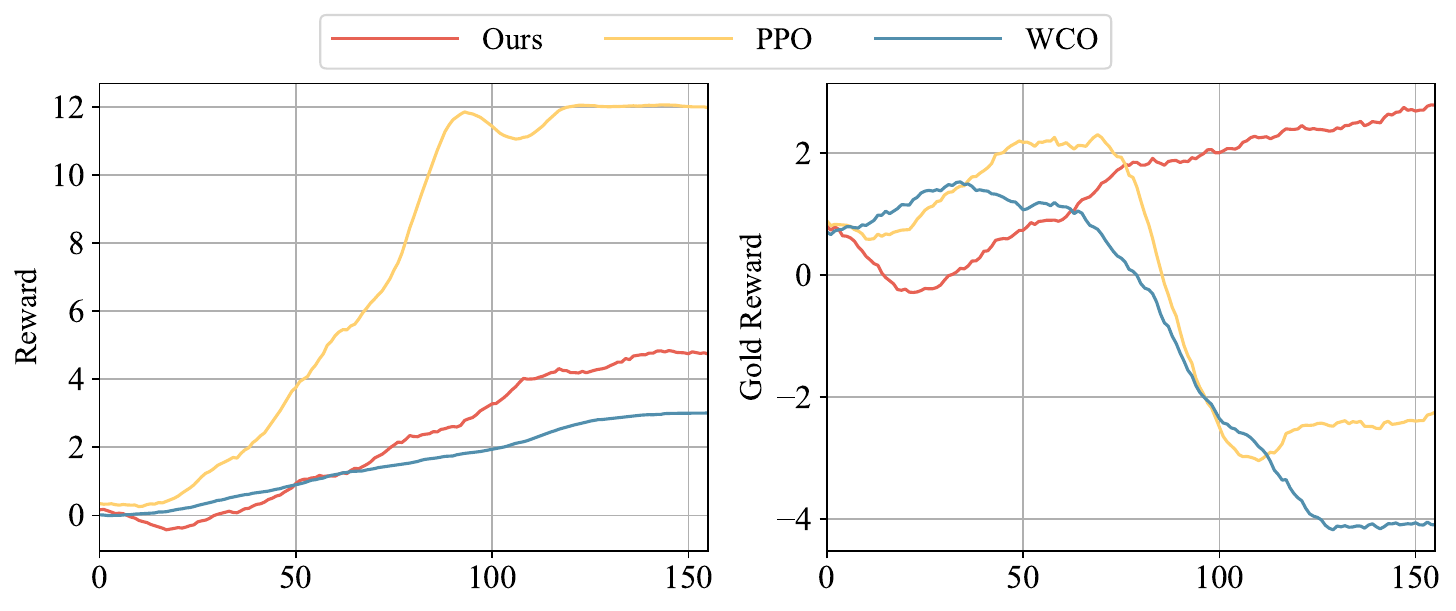}
  \caption{Training curves of different algorithms using AlpacaFarm}
  \label{fig:alpacafarm}
\end{figure}

Due to the relatively smaller size of the AlpacaFarm dataset, the accuracy of the ScoreLM proxy model is lower compared to when using the UltraFeedback dataset. This limitation results in an earlier occurrence of the roo phenomenon in both PPO and WCO. However, only BSPO shows continuous improvement in both proxy rewards and gold rewards, highlighting its ability to mitigate over-optimization while identifying optimal policies effectively.

}

\end{document}